\theoremstyle{plain}
\newtheorem{theorem}{Theorem}[section]
\newtheorem{lemma}[theorem]{Lemma}
\theoremstyle{definition}
\newtheorem{definition}[theorem]{Definition}
\theoremstyle{remark}
\newcommand{\fid}{\texttt{FID}}
\newcommand{\dbi}{\texttt{DBI}}
\newcommand{\avgfid}{\texttt{AVG-FID}}
\newcommand{\linfid}{\texttt{LIN-FID}}
\newcommand{\flin}{F_{lin}}
\newcommand{\argmax}{\text{argmax}}
\newcommand{\argmin}{\text{argmin}}
\newcommand{\cscg}{\text{CSC}_{\mathcal{G}}}
\newcommand{\defeq}{\vcentcolon=}
\icmltitlerunning{Feature Importance Disparities for Data Bias Investigations}
\begin{document}

\twocolumn[
\icmltitle{Feature Importance Disparities for Data Bias Investigations}




\begin{icmlauthorlist}
\icmlauthor{Peter W Chang}{hbs}
\icmlauthor{Leor Fishman}{hcol}
\icmlauthor{Seth Neel}{hbs}
\end{icmlauthorlist}

\icmlaffiliation{hbs}{SAFR AI Lab, Harvard Business School $\&$ Kempner Institute, Boston, MA}
\icmlaffiliation{hcol}{Harvard College, Cambridge, MA}

\icmlcorrespondingauthor{Seth Neel}{sethneel.ai@gmail.com}

\icmlkeywords{explainability, fairness, auditing}

\vskip 0.3in
]



\printAffiliationsAndNotice{}

\begin{abstract}
    It is widely held that one cause of downstream bias in classifiers is bias present in the training data. Rectifying such biases may involve context-dependent interventions such as training separate models on subgroups, removing features with bias in the collection process, or even conducting real-world experiments to ascertain sources of bias. Despite the need for such data bias investigations, few automated methods exist to assist practitioners in these efforts. In this paper, we present one such method that given a dataset $X$ consisting of protected and unprotected features, outcomes $y$, and a regressor $h$ that predicts $y$ given $X$, outputs a tuple $(f_j, g)$, with the following property: $g$ corresponds to a subset of the training dataset $(X, y)$, such that the $j^{th}$ feature $f_j$ has much larger (or smaller) \emph{influence} in the subgroup $g$, than on the dataset overall, which we call \emph{feature importance disparity} ($\fid$).  We show across $4$ datasets and $4$ common feature importance methods of broad interest to the machine learning community that we can efficiently find subgroups with large $\fid$ values even over exponentially large subgroup classes and in practice these groups correspond to subgroups with potentially serious bias issues as measured by standard fairness metrics.
\end{abstract}

\section{Introduction}
\label{sec:intro}
Machine learning is rapidly becoming a more important, yet more opaque part of our lives and decision making -- with increasingly high stakes use cases such as recidivism analysis \cite{angwin}, loan granting and terms \cite{dastille} and child protective services \cite{child}. One of the hopes of wide-scale ML deployment has been that those algorithms might be free of our human biases and imperfections. This hope was, unfortunately, naive. Over the last decade, an interdisciplinary body of research has shown that machine learning algorithms can be deeply biased in both subtle and direct ways \cite{fairml}, and has focused on developing countless techniques to produce fairer models \cite{fair_ml_survey}. 

One of the primary causes of model bias is bias inherent in the training data, rather than an explicitly biased training procedure. While the majority of work on fairness seeks to remove bias by learning a fairer representation of the data \cite{zem_fair} or by explicitly constraining the downstream classifier to conform to a specific fairness notion \cite{eo_fair, red_fair}, fairness notions have been shown to be brittle and often times contradictory \cite{dwork_comp, Kleinberg2016InherentTI}. More importantly, these approaches elide what could be a more important question for the practitioner: \emph{What is the source of bias in the training data, and what subgroups in the data are being effected?} 

We call the process of answering this question a \emph{data bias investigation} ($\dbi$), and in this paper we develop a technique to aid in a $\dbi$ by allowing an analyst to (efficiently and provably) identify structured subsets of the training data to focus their bias investigation. Prior work has typically focused on identifying such subsets by finding subsets of the training data that maximally violate a specific fairness criterion, which typically corresponds to the classifier having a higher error rate on the group than on the population. We take a very different approach; rather than optimizing for a specific fairness notion, we find subgroups where \emph{a specific feature in the data has out-sized impact in the subgroup, relative to the population as a whole.} 

To build some intuition for why feature importance disparities, $\fid$ as we call them, might be a useful notion when looking for dataset bias, consider the following example from Cynthia Dwork, widely regarded as pioneer in the field of algorithmic bias, in a $2015$ in a New York Times interview \cite{nyt}:
    \textit{Suppose we have a minority group in which bright students are steered toward studying math, and suppose that in the majority group bright students are steered instead toward finance. An easy way to find good students is to look for students studying finance, and if the minority is small, this simple classification scheme could find most of the bright students. But not only is it unfair to the bright students in the minority group, it is also low utility.}
    
Unpacking this example further, the feature $\texttt{is-finance-major}$ is predictive in finding bright students in the population at large but not in the minority group. Meanwhile, the feature $\texttt{is-math-major}$ is highly predictive in the minority group but not at all in the majority group. The classifier that only selects students who study finance is unfair to the minority group, exactly because of the differing importance of these features in the two groups. Once the subgroups have been identified, the actions the analyst then takes is then entirely context-dependent. This could include ``easy'' fixes like training a separate model on the subgroup found or excluding a specific feature from training, or more complex remedies like investigating how the specific feature or the outcome variable was collected in the group, and if there is bias in that collection process. 

While simple models like decision trees or linear regression come equipped with intuitive notions of feature importance, in general there is no definitive notion of feature importance for complex models. Approaches that have garnered substantial attention include local model agnostic methods like LIME \cite{lime}, SHAP \cite{shap}, model-specific saliency maps \cite{saliency}, and example-based counterfactual explanations \cite{molnar2022}. Concerns about the stability and robustness of the most widely used feature importance notions, including the ones we study, have been raised \cite{Dai_2022, agarwal2022rethinking, alvarezmelis2018robustness, bansal2020sam, Dimanov2020YouST, slack2020fooling} and these notions are often at odds with each other, so none can be considered definitive \cite{disagree}. Regardless of these limitations, these notions are used widely in practice today, and are still useful as a diagnostic tool as we eventually propose. 

Fixing any of these notions of feature importance, given a small set of protected subgroups, it would be simple to iterate through the subgroups and features, compute the $\fid$ for each feature with respect to each subgroup, and then select the feature and subgroup that shows the largest disparity. However, it is also known in the fairness literature that while a classifier may look fair when comparing a given fairness metric across a handful of sensitive subgroups, when the notion of a sensitive subgroup is generalized to encompass combinations and interactions between sensitive features (known as \emph{rich subgroups} \cite{rich}), large disparities can emerge. We verify that this phenomenon of rich subgroups uncovering much larger disparities than marginal subgroups alone also holds for feature importance in Subsection~\ref{sec:rich_marg_main}. Even for simple definitions of rich subgroup such as conjunctions of binary features, the number of subgroups is exponential in the number of sensitive attributes, and so it is infeasible to compute the metric on each sensitive subgroup and find the largest value by brute force. This raises an obvious question in light of the prior discussion, although one that to the best of our knowledge has not been thoroughly studied: \emph{When applied to classifiers and datasets where bias is a concern, do these feature importance notions uncover substantial differences in feature importance across rich subgroups, and can they be efficiently detected?}

\subsection{Case Study: Data Bias Investigation on COMPAS}
\label{sec:case_study}

Before diving into the technical details of our method, we illustrate how our technique can yield interesting potential interventions when training a random forest classifier $h$ to predict \texttt{two-year-recidivism} on the \texttt{COMPAS} dataset \cite{angwin}. Using the feature importance method SHAP \cite{shap} and our Algorithm~\ref{alg:cap}, we find that the feature $\texttt{priors-count}$ is substantially more important when predicting \texttt{two-year-recidivism} on a subgroup largely defined by Native-American and African-American males that makes up ${\sim} 9\%$ of the training set (Figure~\ref{fig:case_study}). While we further discuss in Appendix~\ref{sec:limitations} that disparities in feature importance do not guarantee that a subgroup has a fairness disparity, we show in Subsection~\ref{subsec:fairness} that empirically this is often the case. In this example, we find that conventional fairness metrics are slightly worse on this subgroup relative to the population, with ${\sim}1\%$ lower accuracy and ${\sim}1\%$ higher false positive rate (FPR). One simple solution to increase accuracy on the subgroup and potentially reduce the disparity is to train a separate model $h_g$ for the subgroup, which we find drops the FPR in the subgroup by $7.5\%$ and increases accuracy by $7\%$. Furthermore, the disparity between the average SHAP value for the feature $\texttt{priors-count}$ using $h_g$, and the average SHAP value over whole population using $h$ is more than halved. Another intuitive technical solution is to train a new model, $h_{-f}$, without the feature $\texttt{priors-count}$. With this solution, there is a moderate decrease in fairness disparity, but it also comes with a noticeable drop in model performance, likely due to removing the predictive power of the feature $\texttt{priors-count}$. On a qualitative level, identifying this subgroup could also motivate further research into how Native-American and African-American males in the \texttt{COMPAS} dataset are policed differently, possibly resulting in measurement biases in the $\texttt{priors-count}$ feature or the \texttt{two-year-recidivism} outcome.

\begin{figure}[h]
  \centering
  \includegraphics[width=.98\linewidth]{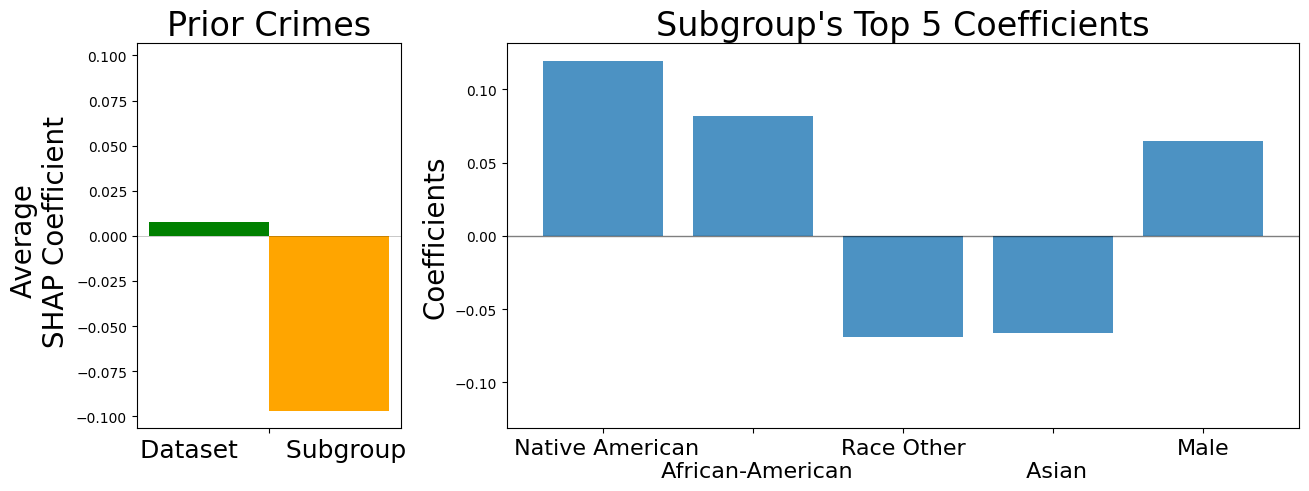}
  \caption{Exploring a high $\fid$ subgroup/feature pair for COMPAS. The first graph compares the average SHAP feature importance for \texttt{priors-count} in the subgroup vs. the dataset as a whole. The second graph shows the $5$ largest coefficients of the linear function of sensitive attributes that define the subgroup.}
  \label{fig:case_study}
\end{figure}

\subsection{Results}

While the prior example clearly illustrates the utility of our method at a high level, the devil is in the details, and in the rest of the paper, we formalize the notions of feature importance and protected rich subgroups along with our methods for efficiently detecting potentially biased subgroups.

Our most important contribution is introducing the notion of \emph{feature importance disparity} in the context of recently developed feature importance notions, and with respect to rich subgroups (Definition~\ref{def:fid}). We categorize a feature importance notion as \emph{separable} or not, based on whether it can be expressed as a sum over points in the subgroup (Definition~\ref{def:sep}) and define a variant of $\fid$, the \emph{average feature importance disparity} ($\avgfid$, Definition~\ref{def:avg_sep}). Our main theoretical contribution is Theorem~\ref{thm:sep} in Section~\ref{sec:sep}, which says informally that although the problem of finding the maximal $\fid$ subgroup is NP-hard in the worst case (Appendix~\ref{sec:hardness_app}), given access to an oracle for \emph{cost-sensitive classification} with respect to the rich subgroup class $\mathcal{G}$, (Definition~\ref{def:csc}), Algorithm~\ref{alg:cap} efficiently learns the subgroup with maximal $\fid$ for any separable feature importance notion.


In Section~\ref{sec:experiments}, we conduct a thorough empirical evaluation, auditing for large $\fid$ subgroups on the Student \cite{student}, COMPAS \cite{angwin}, Bank \cite{MORO201422}, and Folktables \cite{folktables} datasets, using LIME, SHAP, saliency maps, and linear regression coefficient as feature importance notions. Our experiments establish the following: (i) Across all (dataset, importance notion) pairs, we can find subgroups defined as functions of sensitive features that have large $\fid$ with respect to a given feature (Table~\ref{tab:summary}, Figures~\ref{fig:all_de}, \ref{fig:vid_dist}). (ii) Inspecting the coefficients of these subgroups yields interesting discussion about potential dataset bias (Figure~\ref{fig:case_study}, Section~\ref{sec:key_sub_app}). (iii) In about half the cases, rich subgroups yield higher out of sample $\fid$ compared to only searching subgroups defined by a single sensitive attribute, justifying the use of rich subgroups (Section~\ref{sec:rich_marg_main}). (iv) These subgroups have disparities in accepted fairness metrics such as demographic parity and calibration (Table~\ref{tab:fair_compare}). Conversely, rich subgroups that maximally violate fairness metrics also express large $\fid$ values (Table~\ref{tab:gerry_compare})

These results generalize out of sample, both in terms of the $\fid$ values found, and the sizes of the corresponding subgroups found (Appendix~\ref{subsec:valid}). Taken together, these theoretical and empirical results highlight our methods as an important addition to the diagnostic toolkit for $\dbi$ in tabular datasets with sensitive features.
\section{Related Work}
\label{sec:related}

There is substantial work investigating bias in the context of machine learning models and their training data \cite{fairml, fair_ml_survey}. We are motivated at a high level by existing work on dataset bias \cite{kamiran2012data, tommasi2017deeper, li2019repair}, however, to the best of our knowledge, this is the first work investigating the disparity in feature importance values in the context of rich subgroups as a fairness diagnostic. For more related work, see Appendix~\ref{sec:rel_work_app}.

\textbf{Anomalous Subgroup Discovery}. In terms of approach, two closely related works are \cite{Dai_2022} and \cite{balagopalan2022explainability} which link fairness concerns on sensitive subgroups with model explanation quality, as measured by properties like stability and fidelity. Our work differs in that we are focused on the magnitude of explanation disparities themselves rather than their ``quality,'' and that we extend our results to the rich subgroup setting. Our algorithm for searching an exponentially large subgroup space is a novel and necessary addition to work in this space. Another area of research looks to prove that a chosen score function satisfies the linear time subset scanning property \cite{neill2012spatialsubset} which can then be leveraged to search the subgroup space for classifier bias \cite{Zhang2016ltss, boxer2023auditing} in linear time. While it is hard to say with absolute certainty that this approach would not be useful it is not immediately apparent how we would force a subset scanning method to optimize over \emph{rich subgroups}.

\textbf{Rich Subgroups and Multicalibration}. At a technical level, the most closely related papers are \cite{gerry, calib} which introduce the notion of the rich subgroup class $\mathcal{G}$ over sensitive features in the context of learning classifiers that are with respect to equalized odds or calibration. Our Algorithm~\ref{alg:cap} fits into the paradigm of ``oracle-efficient" algorithms for solving constrained optimization problems introduced in \cite{msr} and developed in the context of rich subgroups in \cite{gerry, rich, calib}. There has been much recent interest in learning multicalibrated predictors because of connections to uncertainty estimation and omnipredictors \cite{multicalib_2, multicalib_omni, multicalib_roth}. None of these works consider feature importance disparities. 

\textbf{Feature Importance Notions}. 
For the field of interpretable or explainable machine learning, we refer to the survey by \cite{molnar2022}. The most relevant works cover methods used to investigate the importance of a feature in a given subset of the dataset. Local explanation methods assign a feature importance for every point $(x,y)$ and define a notion of importance in a subgroup by summing or averaging over the points in the subgroup as we do in Definitions~\ref{def:sep}, \ref{def:avg_sep}.
\section{Preliminaries}
\label{sec:preliminaries}

Let $X^n$ represent our dataset, consisting of $n$ individuals defined by the tuple $((x,x'),y)$ where $x \in \mathcal{X}_{sense}$ is the vector of protected features, $x' \in \mathcal{X}_{safe}$ is the vector of unprotected features, and $y \in \mathcal{Y}$ denotes the label. With $X =  (x,x') \in \mathcal{X} = \mathcal{X}_{sense} \times \mathcal{X}_{safe} \subset \mathbb{R}^d$ denoting a joint feature, the data points $(X,y)$ are drawn i.i.d. from a distribution $\mathcal{R}$. Let $h: \mathcal{X} \to \mathcal{Y}$ denote a classifier or regressor that predicts $y$ from $X$. We define a \emph{rich subgroup class} $\mathcal{G} = \{g_{\alpha}\}_{\alpha \in \Omega}$ as a collection of functions $g: \mathcal{X}_{sens} \to [0,1]$, where $g(x')$ denotes the membership of point $X =  (x,x')$ in group $g$. Note that this is the same subgroup definition as in \cite{gerry}, but without the constraint that $g(x') \in \{0 ,1\}$, which supports varying degrees of group membership. E.g. a biracial person may be $.5$ a member of two racial groups. Let $f_j, j \in [d]$ denote the $j^{th}$ feature in $\mathcal{X} \subset \mathbb{R}^d$. Then for a classifier $h$ and subgroup $g \in \mathcal{G}$, let $F$ be a feature importance notion where $F(f_j, g(X^n), h)$ denote the \emph{importance  $h$ attributes to feature $j$ in the subgroup $g(X^n)$}, and $F(f_j, X^n, h)$ be the importance  $h$ attributes to $f_j$ on the entire dataset. We will provide more specific instantiations of $F$ shortly, but we state our definition of $\fid$ in the greatest possible generality below. 

\begin{definition}
\label{def:fid}
  (Feature Importance Disparity). Given a classifier $h$, a subgroup of $X^n$ defined by $g \in G$, and a feature $f_j \in [d]$, then given a feature importance notion $F(\cdot)$, the feature importance disparity relative to $g$ is defined as:
  $$\fid(f_j, g, h) = \mathbb{E}_{X \sim \mathcal{R}}\lvert F(f_j, g(X^n), h) - F(f_j, X^n, h) \rvert$$
\end{definition}

We will suppress $h$ and write $\fid(j, g)$ unless necessary to clarify the classifier we are using. Now, given $h$ and $X^n$, our goal is to find the feature subgroup pair $(j^*, g^{*}) \in [d] \times \mathcal{G}$ that maximizes $\fid(j, g)$, or $(j^*, g^{*}) = \argmax_{g \in \mathcal{G}, j \in [d]}\fid(j, g)$. 


We now get more concrete about our feature importance notion $F(\cdot)$. First, we define the class of \emph{separable} feature importance notions:

\begin{definition}
\label{def:sep}
    (Locally Separable). A feature importance notion $F(\cdot)$ is locally separable if it can be decomposed as a point wise sum of local model explanation values $F'$:
   \begin{equation*}
    F(f_j, X^n, h) = \sum_{X \in X^n} F'(f_j,X,h)
   \end{equation*}
\end{definition}

It follows that for separable notions, $F(f_j, g(X^n), h) = \sum_{X \in X^n} g(X)F'(f_j,X,h)$. Given a local model explanation $F'$, we can define a more specific form of $\fid$, the \emph{average feature importance disparity} ($\avgfid$), which compares the average feature importance within a subgroup to the average importance on the dataset.

\begin{definition}
\label{def:avg_sep}
    (Average Case Locally Separable $\fid$). For a $g \in \mathcal{G}$, let $|g| = \displaystyle\sum_{X \in X^n}g(X)$. Given a local model explanation $F'(\cdot)$, we define the corresponding: 
   \begin{equation*}
        \begin{split}
            \avgfid(f_j, g, h) = & \mathbb{E}_{X^n \sim \mathcal{R}^n}\lvert \frac{1}{|g|}\sum_{X \in X^n}g(X)F'(f_j, X, h) \\
            & - \frac{1}{n}\sum_{X \in X^n}F'(f_j, X, h) \rvert 
        \end{split}
   \end{equation*}
\end{definition} 

Note that $\avgfid$ is not equivalent to a separable $\fid$, since we divide by $|g|$, impacting every term in the summation. In Section~\ref{sec:sep}, we show that we can optimize for $\avgfid$ by optimizing a version of the $\fid$ problem with size constraints, which we can do efficiently via Algorithm~\ref{alg:cap}. 

This notion of $\emph{separability}$ is crucial to understanding the remainder of the paper. In Section~\ref{sec:sep}, we show that for any \emph{separable} $\fid$, Algorithm~\ref{alg:cap} is an (oracle) efficient way to compute the largest $\fid$ subgroup of a specified size in polynomial time. By ``oracle efficient,'' we follow \cite{msr, gerry} where we mean access to an optimization oracle that can solve (possibly NP-hard) problems. While this sounds like a strong assumption, in practice we can take advantage of modern optimization algorithms that can solve hard non-convex optimization problems (e.g. training neural networks). This framework has led to the development of many practical algorithms with a strong theoretical grounding \cite{msr, gerry, rich, calib}, and as shown in Section~\ref{sec:experiments} works well in practice here as well. The type of oracle we need is called a Cost Sensitive Classification (CSC) oracle, which we define in Appendix~\ref{sec:cscg_app}.

\section{Optimizing for $\avgfid$}
\label{sec:optimizing}

In this section, we show how to (oracle) efficiently compute the rich subgroup that maximizes the $\avgfid$. Rather than optimize $\avgfid$ directly, our Algorithm~\ref{alg:cap} solves an optimization problem that maximizes the $\fid$ subject to a group size constraint:

\begin{equation}
\label{eq:constr}
    \begin{split}
        & \quad \max_{g \in \mathcal{G}}|F(f_j,g(X^n),h) - F(f_j,X^n,h)| \\
        \textrm{s.t.} & \quad \Phi_L(g) \equiv \alpha_L -  \frac{1}{n}\sum_{X \in X^n}g(X) \leq 0, \\
        & \quad   \Phi_U(g) \equiv \frac{1}{n}\sum_{X \in X^n}g(X) - \alpha_U \leq 0
    \end{split}
\end{equation}

where $\Phi_L$ and $\Phi_U$ are "size violation" functions given a subgroup function $g$.
We denote the optimal solution to Equation~\ref{eq:constr} by $g^*_{[\alpha_L, \alpha_U]}$.
We focus on optimizing the constrained $\fid$ since the following primitive also allows us to efficiently optimize $\avgfid$:
\begin{enumerate}
\item Discretize $[0,1]$ into intervals $(\frac{i-1}{n}, \frac{i}{n}]_{i=1}^{n}$. Given feature $f_j$, compute $g^*_{(\frac{i-1}{n}, \frac{i}{n}]}$ for $i=1... n$.
\item Outputting $g_{k^*},$ where  $k^* = \argmax_{k}\frac{k}{n}|F(f_j, g_k, h)|$ approximately maximizes the $\avgfid$ given an appropriately large number of intervals $n$.
\end{enumerate}

Our proof for this is available in Appendix~\ref{sec:avgsepfidproof_app}. We now state our main theorem, which shows that we can solve the constrained $\fid$ problem in Equation~\ref{eq:constr} with polynomially many calls to $\text{CSC}_{\mathcal{G}}$.  

\label{sec:sep}

\begin{theorem}
\label{thm:sep}
Let $F$ be a separable $\fid$ notion, fix a classifier $h$, subgroup class $\mathcal{G}$, and oracle $\text{CSC}_{\mathcal{G}}$. Then choosing accuracy constant $\nu$ and bound constant $B$ and fixing a feature of interest $f_j$,  we will run Algorithm~\ref{alg:cap} twice; once with $\fid$ given by $F$, and once with $\fid$ given by $-F$. Let $\hat{p}_{\mathcal{G}}^{T}$ be the distribution returned after $T = O(\frac{4n^2 B^2}{\nu^2})$ iterations by Algorithm~\ref{alg:cap} that achieves the larger value of $ \; \mathbb{E}[\fid(j, g)]$. Then:  

\begin{equation}
\begin{aligned}
 & \fid(j, g_j^{*})-  \mathbb{E}_{g \sim \hat{p}_{\mathcal{G}}^{T}}[\fid(j, g)] \leq \nu \\
 & \quad \lvert \Phi_L(g) \rvert, \lvert \Phi_U(g) \rvert  \leq  \frac{1 + 2\nu}{B}
\end{aligned}
\end{equation}
\end{theorem}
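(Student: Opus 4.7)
The plan is to recognize Algorithm~\ref{alg:cap} as an instance of the oracle-efficient saddle-point framework of \cite{msr, gerry}, and to translate the resulting saddle-point guarantees into the stated primal-suboptimality and constraint-violation bounds.

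First, I would linearize the absolute value appearing in Equation~\ref{eq:constr}. Writing $|F(f_j,g,h) - F(f_j,X^n,h)| = \max\bigl(F(f_j,g,h) - F(f_j,X^n,h),\; -(F(f_j,g,h) - F(f_j,X^n,h))\bigr)$ and running Algorithm~\ref{alg:cap} once with $F$ and once with $-F$ (taking the better of the two solutions) reduces the problem to maximizing a signed objective that, by separability, is linear in $g$, namely $\sum_i g(X_i) F'(f_j, X_i, h)$, subject to the two linear size constraints $\Phi_L(g), \Phi_U(g) \leq 0$. This is precisely why the theorem statement invokes the algorithm twice.

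Next, I would introduce the bounded Lagrangian $L(g,\lambda) = \sum_i g(X_i) F'(f_j, X_i, h) + \lambda_0 \Phi_L(g) + \lambda_1 \Phi_U(g)$ over $g \in \mathrm{conv}(\mathcal{G})$ and $\lambda \in \Lambda = [0, B]^2$, where the $\theta \mapsto \lambda$ parameterization in the algorithm enforces $\lambda \in \Lambda$. Separability makes $L(\cdot, \lambda)$ linear over $\mathcal{G}$ with per-point costs matching exactly the vector $c_t^1$, so best-responding in $g$ is one call to $\mathrm{CSC}_\mathcal{G}$. I would then verify that the dual updates implement exponentiated gradient ascent on $L(\hat{g}_t, \cdot)$ with step size $\eta = \nu/(2 n^2 B)$. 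With this set-up, the standard regret analysis applies: after $T = O(4 n^2 B^2 / \nu^2)$ iterations, exponentiated-gradient regret (as used in \cite{msr, gerry}) bounds the $\lambda$-player's average regret by $\nu/2$, the $g$-player has zero regret because it best-responds, and the Freund--Schapire argument concludes that $(\hat{p}_\mathcal{G}^T, \hat{p}_\lambda^T)$ is a $\nu$-approximate saddle point of $L$. The quantity $v_t$ maintained in the algorithm is precisely an upper bound on the primal-dual gap at iteration $t$, which is why the early-termination criterion is sound.

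The final step is to convert the approximate saddle point into the two conclusions of the theorem. Comparing $L(\hat{g}, \hat{\lambda})$ to $L(g^*, \hat{\lambda})$ and using feasibility of $g^*$ together with $\hat{\lambda} \geq 0$ yields the objective gap $\fid(j, g_j^*) - \mathbb{E}_{g \sim \hat{p}_\mathcal{G}^T}[\fid(j, g)] \leq \nu$. Comparing $L(\hat{g}, \hat{\lambda})$ instead to $L(\hat{g}, B \mathbf{e}_i)$ for the most-violated coordinate $i \in \{L, U\}$, and using the trivial bound that the FID objective ranges in an interval of length at most $1$, yields $|\Phi_L(\hat{g})|, |\Phi_U(\hat{g})| \leq (1 + 2\nu)/B$. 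The main obstacle is exactly this last piece of bookkeeping: getting the constraint-violation constant to be $(1 + 2\nu)/B$ rather than a looser $O((1+\nu)/B)$ requires carefully using the exact form of $v_t$ together with a normalization (implicit in the choice of $B$ and $\mathbf{C}$) under which the FID objective is bounded in $[0,1]$. Everything else follows standard oracle-efficient saddle-point machinery applied to the separable FID setting.
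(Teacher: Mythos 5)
Your proposal matches the paper's proof essentially step for step: both linearize the absolute value by running the algorithm with $F$ and $-F$, relax to distributions over $\mathcal{G}$, form the bounded Lagrangian, implement the subgroup player's best response via a single $\cscg$ call on the shifted costs, run exponentiated gradient for the dual player, and invoke the Freund--Schapire no-regret argument to obtain a $\nu$-approximate saddle point, which is then converted into the objective-gap and constraint-violation bounds (the paper outsources this last conversion to Theorem 4.5 of the GerryFair paper rather than rederiving it as you sketch). Your closing remark that the $(1+2\nu)/B$ constant presupposes a normalization of the objective is a fair observation the paper leaves implicit, but it does not constitute a different approach.
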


We defer the proof of Theorem~\ref{thm:sep} to Appendix~\ref{sec:proof_app}. In summary, rather than optimizing over $g \in \mathcal{G}$, we optimize over distributions $\Delta(\mathcal{G})$. This allows us to cast the optimization problem in Equation~\ref{eq:constr} as a linear program so we can form the Lagrangian $L$, which is the sum of the feature importance values and the size constraint functions weighted by the dual variables $\lambda$, and apply strong duality. We can then cast the constrained optimization as computing the Nash equilibrium of a two-player zero-sum game, and apply the classical result of \cite{freund} which says that if both players implement $\emph{no-regret}$ strategies, then we converge to the Nash equilibrium at a rate given by the average regret of both players converging to zero. Algorithm~\ref{alg:cap} implements the no-regret algorithm exponentiated gradient descent \cite{expgrad} for the $\max$ player, who optimizes $\lambda$, and best-response via a CSC solve for the $\min$ player, who aims to maximize subgroup disparity to optimize the rich subgroup distribution.

We note that rather than computing the group $g$ that maximizes $\fid(j, g)$ subject to the size constraint, our algorithm outputs a distribution over groups $\hat{p}_{\mathcal{G}}^{T}$ that satisfies this process \emph{on average} over the groups. In theory, this seems like a drawback for interpretability. However, in practice we simply take the groups $g_t$ found at each round and output the ones that are in the appropriate size range, and have largest $\fid$ values. The results in Section~\ref{sec:experiments} validate that this heuristic choice is able to find groups that are both feasible and have large $\fid$ values. This method also generalizes out of sample showing that the $\fid$ is not artificially inflated by multiple testing (Appendix~\ref{subsec:valid}). Moreover, our method provides a menu of potential groups $(g_t)_{t=1}^{T}$ that can be quickly evaluated for large $\fid$, which can be a useful feature to find interesting biases not present in the maximal subgroup.
\section{Experiments}
\label{sec:experiments}
Here we report the results of our empirical investigation across $16$ different dataset/$\fid$-notion pairings. These results confirm that our method can find large $\avgfid$ values corresponding to rich subgroups defined as simple functions of protected attributes (Table~\ref{tab:summary}, Figures~\ref{fig:all_de}, \ref{fig:vid_dist}), and are larger than those found by optimizing over marginal subgroups alone (Section~\ref{sec:rich_marg_main}). Moreover, in Section~\ref{subsec:fairness} we find that high $\avgfid$ subgroups tend to have significant disparities in traditional fairness metrics (Table~\ref{tab:fair_compare}), and that rich subgroups that maximize a fairness notion like FPR disparity also express high $\avgfid$ features, albeit smaller than those found by Algorithm~\ref{alg:cap} (Table~\ref{tab:gerry_compare}). Perhaps most significantly, for a tool designed to assist the process of $\dbi$, is that the high $\avgfid$ subgroups found correspond to subgroups and features that are suggestive of potential dataset bias, one example of which we covered in the initial case study and provide further examples of in Section~\ref{sec:key_sub_app}. In Appendix~\ref{sec:synthetic_experiment}, we construct two synthetic datasets, one where the feature importance is the same across all subgroups, and one where there is a deliberately introduced disparity in a specific rich subgroup, in order to verify that our methods i) avoid false discovery and ii) can effectively pick out a high-$\fid$ subgroup. We also include results showing that the rich subgroups found generalize out of sample both in terms of $\avgfid$ and group size $|g|$ (Appendix~\ref{subsec:valid}), are relatively robust to the choice of hypothesis class of $h$ (Appendix~\ref{sec:hypothesis_app}), and that our algorithms converge quickly in practice (Appendix~\ref{sec:conv_app}). The code used for our experiments is available at \href{https://github.com/safr-ai-lab/xai-disparity}{github.com/safr-ai-lab/xai-disparity}.

\subsection{Experimental Details}
\label{subsec:exp_details}

\textbf{Datasets}: We used four popular datasets for the experiments: Student, COMPAS, Bank, and Folktables. For each test, we used COMPAS twice, once predicting two-year recidivism and once predicting decile risk score (labeled COMPAS R and COMPAS D respectively). For each dataset, we specified "sensitive" features which are features generally covered by equal protection or privacy laws (e.g. race, gender, age, health data). Appendix~\ref{sec:details_app} contains more details.

\textbf{Computing the $\avgfid$}: We study $3$ separable notions of $\fid$ based on local model explanations Local-Interpretable, Model-Agnostic (LIME) \cite{lime}, Shapley Additive Explanations (SHAP) \cite{shap}, and saliency maps (GRAD) \cite{saliency}. For every method and dataset, we optimize the constrained $\fid$ over $\alpha$ ranges $(\alpha_L, \alpha_U) = \{ [.01,.05], [.05,.1], [.1,.15], [.15,.2], [.2,.25] \}$. These small ranges allowed us to reasonably compare the $\fid$ values, reported in Table~\ref{tab:summary}. Additionally, these ranges span subgroup sizes that may be of particular interest in fairness research and dataset auditing work. All values of $\avgfid$ reported in the results are \emph{out of sample}; i.e. the $\avgfid$ values are computed on a test set that was not used to optimize the subgroups. Datasets were split into $80-20$ train-test split except for Student which was split $50-50$ due to its small size. Across all datasets, when the $\fid$ was LIME or SHAP, we set $h$ to be a random forest, when it was GRAD we used logistic regression as it requires a classifier whose outputs are differentiable in the inputs. The exact choice of classifier does not have any notable impact on the outcomes as we discuss in Appendix~\ref{sec:hypothesis_app}. Due to computation constraints, GRAD was only tested on the COMPAS R dataset. We defer the  details in how we implemented the importance notions and Algorithm~\ref{alg:cap} to Appendix~\ref{sec:details_app}. 

\textbf{Linear Feature Importance Disparity}: In addition to the $3$ separable notions of $\fid$, we also studied an approach for a non-separable notion of importance. Linear regression (labeled LR in results) is a popular model that is inherently interpretable; the coefficients of a weighted least squares (WLS) solution represent the importance of each feature. We can thus define another variant of $\fid$, the linear feature importance disparity ($\linfid$), as the difference in the WLS coefficient of feature $f_j$ on subgroup $g$ and on the dataset $X^n$. As $\linfid$ is differentiable with respect to $g$, we are able to find a locally optimal $g$ with high $\linfid$ using a non-convex optimizer; we used ADAM. For details and proofs, see Appendix~\ref{sec:lin_fid}.

\subsection{Experimental Results}
\label{subsec:exp_summary}
Table~\ref{tab:summary} summarizes the results of the experiments, which are visualized in Figure~\ref{fig:all_de} on a log-ratio scale for better cross-notion comparison. Across each dataset and importance notion, our methods were able to find subgroups with high $\fid$, often differing by orders of magnitude. For example, on Folktables with LIME as the importance notion, there is a subgroup on which \texttt{age} is on average $225$ times more important than it is for the whole population. Table~\ref{tab:summary} also provides the defining features, listed as the sensitive features which have the largest coefficients in $g$.

\begin{figure*}[h]
  \centering
  \includegraphics[width=.65\linewidth]{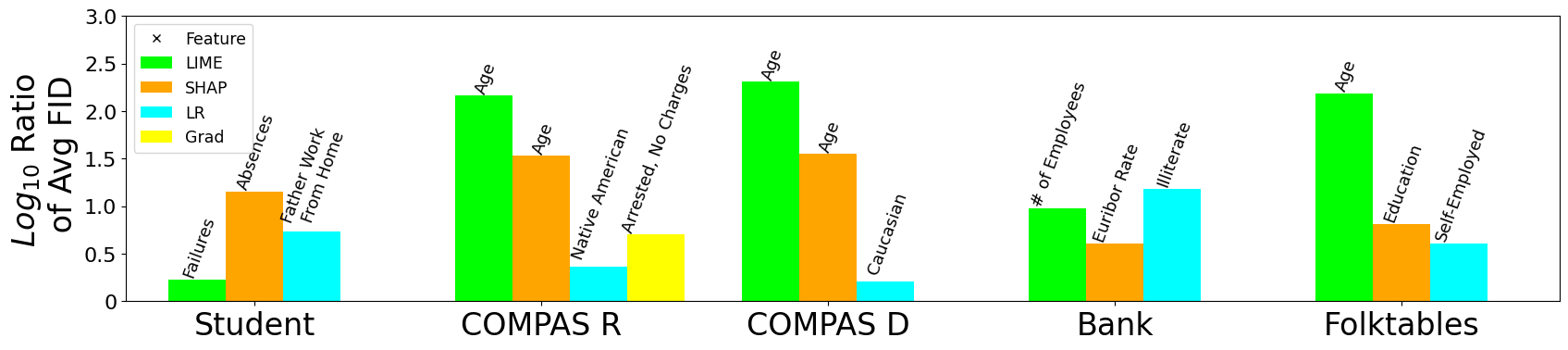}
  \caption{Summary of the highest $\fid$s found for each (dataset, method). This is displayed as $\big | log_{10}(R) \big |$ where $R$ is the ratio of average importance per data point in $g^*$ to the average importance on $X$ for separable notions, or the ratio of coefficients for $\linfid$. This scale allows comparison across different importance notions. The feature associated with each $g^*$ is written above the bar.}
  \label{fig:all_de}
\end{figure*}

\begin{table*}[t]
\centering
  \caption{Summary of the subgroup with highest $\avgfid$ for each experiment along with the corresponding feature, subgroup size, and defining features. Experiments were run across multiple $(\alpha_L, \alpha_U)$ ranges with the highest $\avgfid$ found being displayed. $\mu(F)$ is the average feature importance value on the specified group.}
  \label{tab:summary}
  \scalebox{.8}{
  \begin{tabular}{lllcccl}
    \toprule
    Dataset & Notion & Feature $f_j$ & $\mu(F(f_j, X))$ & $\mu(F(f_j, g))$ & $|g|$ & Defining Features\\
    \midrule
    Student & LIME & Failures & $-.006$ & $-.011$ & $.01$ & Alcohol Use, Urban Home\\
     & SHAP & Absences & $-.15$ & $-2.1$ & $.02$ & Parent Status, Urban Home\\
     & LR & Father WFH & $21.7$ & $-4.0$ & $.03$ & Alcohol Use, Health\\
    COMPAS R & LIME & Age & $.0009$ & $-.14$ & $.05$ & Native-American\\
     & SHAP & Age & $.012$ & $.41$ & $.04$ & Asian-American\\
     & LR & Native American & $.5$ & $1.17$ & $.04$ & Asian/Hispanic-American\\
     & GRAD & Arrest, No Charge & $.09$ & $.02$ & $.05$ & Native-American\\
    COMPAS D & LIME & Age & $-.0003$ & $-.06$ & $.02$ & Native/Black-American\\
     & SHAP & Age & $.06$ & $2.35$ & $.07$ & Black/Asian-American\\
     & LR & Caucasian & $6.7$ & $10.7$ & $.04$ & Native-American\\
    Bank & LIME & \# of Employees & $-.003$ & $.03$ & $.03$ & Marital Status\\
     & SHAP & Euribor Rate & $-.004$ & $.016$ & $.03$ & Marital Status\\
     & LR & Illiterate & $-.07$ & $-.0045$ & $.01$ & Age, Marital Status\\
    Folktables & LIME & Age & $-.0007$ & $-.11$ & $.21$ & Marital Status\\
     & SHAP & Education & $.023$ & $.15$ & $.03$ & Asian-American\\
     & LR & Self-Employed & $-.26$ & $-.06$ & $.02$ & White-American\\
  \bottomrule
\end{tabular}}
\end{table*}

A natural follow up question that arises from this experiment is what does the distribution of $\fid$s look like for a given dataset? Figure~\ref{fig:vid_dist} shows a distribution of the $10$ features on the Bank dataset with the highest $\fid$ values. As we can see, there are a few features where large $\fid$ subgroups can be found, but it tails off significantly. This pattern is replicated across all datasets and feature importance notions. This is a positive result for practical uses, as an analyst or domain expert can focus on a handful of features that perform drastically differently when performing a $\dbi$.

\begin{figure}[h]
  \centering
  \includegraphics[width=.85\linewidth]{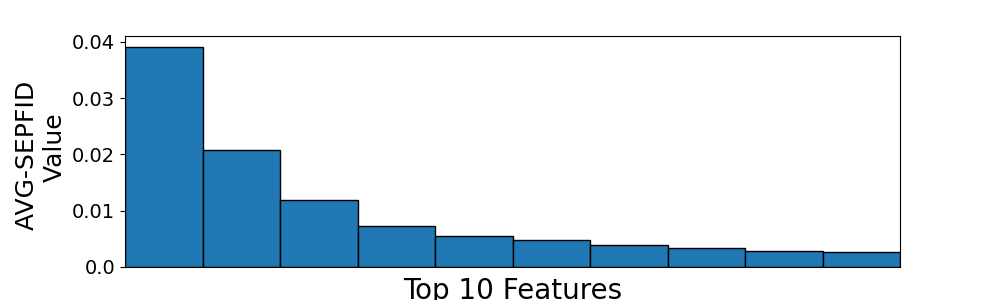}
  \caption{Distribution of $\avgfid$ on the top features from the BANK dataset using LIME. We see a sharp drop off in $\avgfid$. This pattern is seen in all datasets and notions.}
  \label{fig:vid_dist}
\end{figure}

Earlier in Section~\ref{sec:case_study}, we examined a specific case where the $\fid$ found as a result of our method revealed a biased subgroup where the fairness disparities could then be mitigated with targeted approaches. In the next section, across every dataset and feature importance notion, we find similar examples exposing some form of potential bias. We note that not every single (subgroup, feature) pair discovered necessarily implies a fairness concern. For example, $\avgfid$ could be driven in part by correlations between $f_j$ and the sensitive attributes that define $g$. Since it remains true in all fairness work that two contexts that are statistically equivalent may have very different fairness implications in the real world, our method should be viewed as a tool to aid practitioners in \texttt{DBI} rather than as definitive proof of bias.

\subsection{Discussion of High $\fid$ Subgroups}
\label{sec:key_sub_app}


\begin{figure}[b!]
    \centering
    \begin{subfigure}{.49\textwidth}
        \centering
        \includegraphics[width=\linewidth]{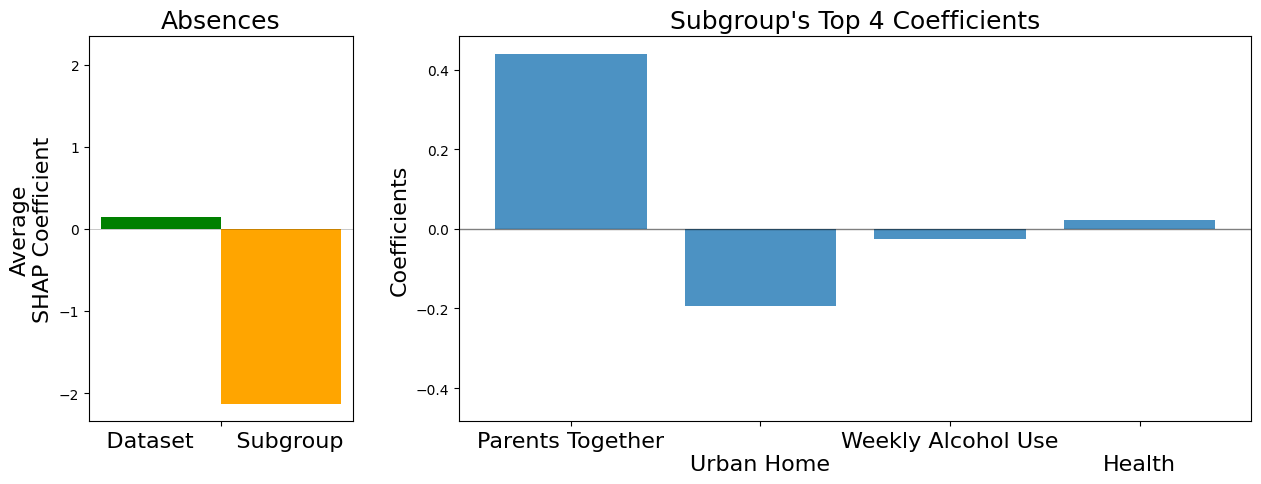}
        \caption{Student: Predicting grade outcomes}
        \label{fig:fid_sub1}
    \end{subfigure}
    \begin{subfigure}{.49\textwidth}
        \centering
        \includegraphics[width=\linewidth]{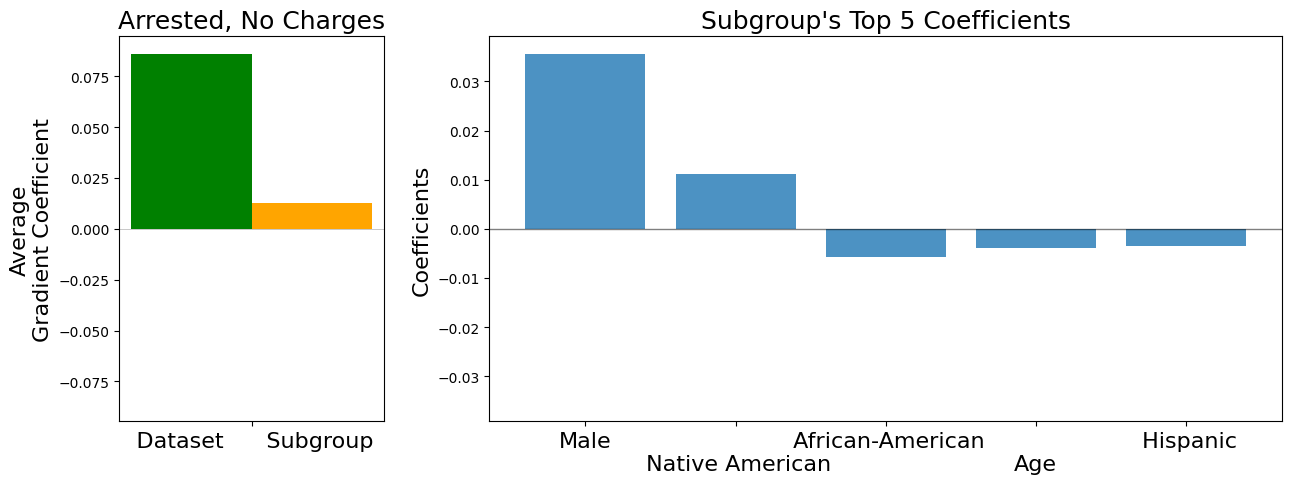}
        \caption{COMPAS: Predicting 2-year recidivism}
        \label{fig:fid_sub2}
    \end{subfigure}
    \vskip .2in
    \begin{subfigure}{.49\textwidth}
        \centering
        \includegraphics[width=\linewidth]{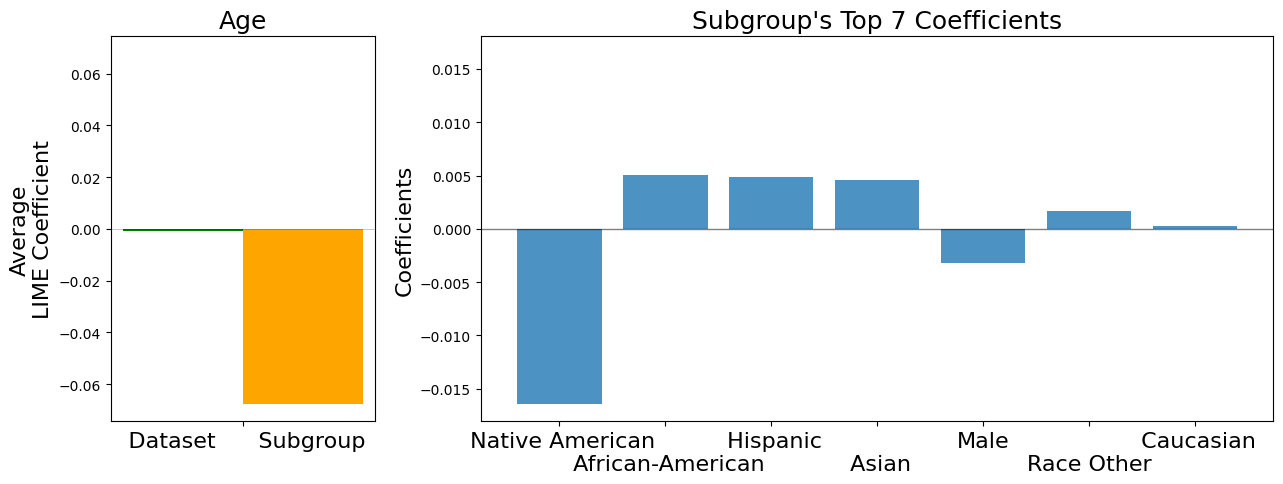}
        \caption{COMPAS: Predicting decile risk score}
        \label{fig:fid_sub3}
    \end{subfigure}
    \begin{subfigure}{.49\textwidth}
        \centering
        \includegraphics[width=\linewidth]{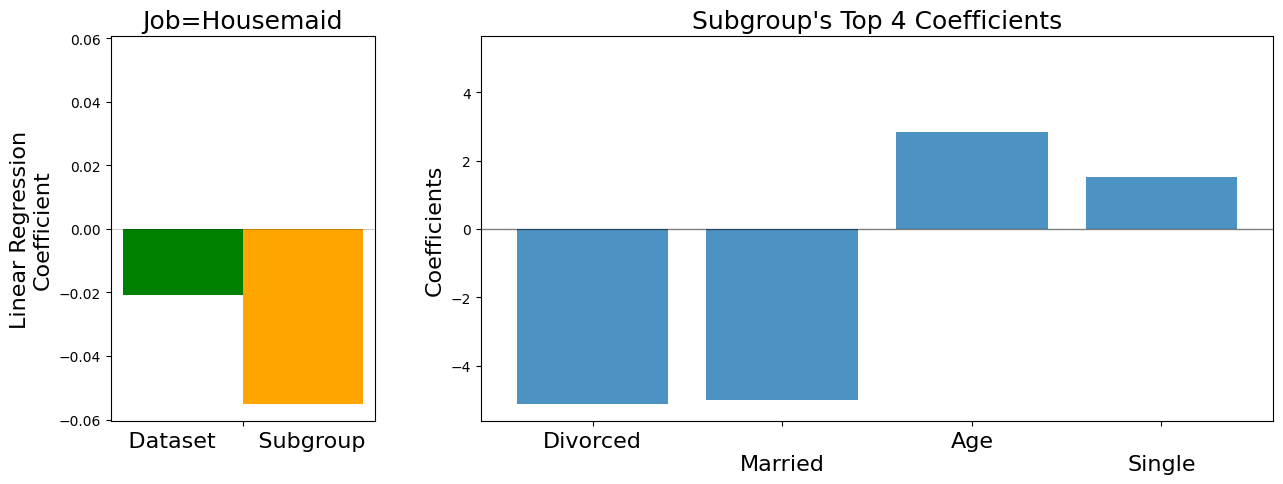}
        \caption{Bank: Predicting whether bank deposit is made}
        \label{fig:fid_sub4}
    \end{subfigure}
    \vskip .2in
    \begin{subfigure}{.49\textwidth}
        \centering
        \includegraphics[width=\linewidth]{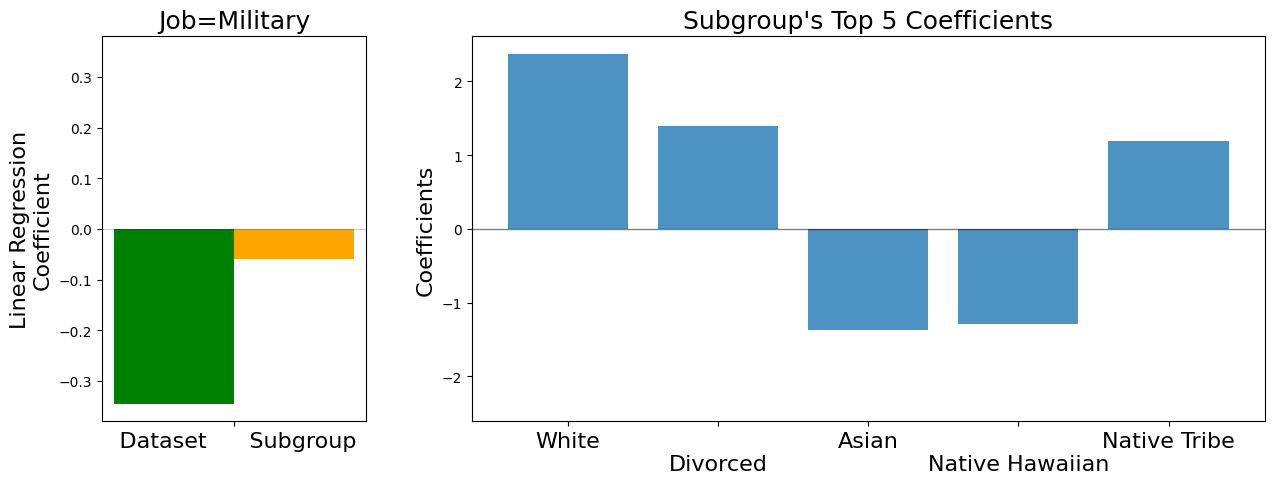}
        \caption{Folktables: Predicting income >\$50k}
        \label{fig:fid_sub5}
    \end{subfigure}
\caption{Exploration of key subgroup/feature pairs found for each dataset. The first graph shows the change in feature importance from whole dataset to subgroup. The second graph shows the main coefficients that define the subgroup.}
\label{fig:key_groups}
\end{figure}
In Figure~\ref{fig:key_groups}, we highlight selections of an interesting (feature, subgroup, method) pair for each dataset. Figure~\ref{fig:fid_sub1} shows that on the Student dataset the feature \texttt{absences} which is of near zero importance on the dataset as a whole, is very negatively correlated with student performance on a subgroup whose top $2$ features indicate whether a student's parents are together, and if they live in an rural neighborhood. Figure~\ref{fig:fid_sub2} shows that on the COMPAS dataset with method GRAD, the feature \texttt{arrested-but-with-no-charges} is typically highly important when predicting \texttt{two-year-recidivism}. However, it carries significantly less importance on a subgroup that is largely defined as Native American males. When predicting the decile risk score on COMPAS, LIME indicates that age is not important on the dataset as a whole; however, for non-Native American, female minorities, older age can be used to explain a lower \texttt{Decile Score}. On the Bank dataset using $\linfid$, we see that a linear regression trained on points from a subgroup defined by older, single individuals, puts more importance on \texttt{job=housemaid} when predicting likelihood in signing up for an account. Finally on Folktables, we see that $\linfid$ assigns much lower weight to the \texttt{job=military} feature among a subgroup that is mainly white and divorced people than in the overall dataset when predicting income. These interesting examples, in conjunction with the results reported in Table~\ref{tab:summary}, highlight the usefulness of our method in finding subgroups where a concerned analyst or domain expert could dig deeper to determine how biases might be manifesting themselves in the data and how to correct for them. 

\subsection{Comparison of $\fid$ Values on Rich vs. Marginal Subgroups}
\label{sec:rich_marg_main}

To better quantify the advantage of rich subgroups, we performed the same analysis but only searching over the marginal subgroup space. For each dataset and importance notion pair, we established the finite list of subgroups defined by a single sensitive characteristic and computed the $\fid$ for each of these subgroups. In Figure~\ref{fig:rich_marg_main}, we compare some of the maximal $\avgfid$ rich subgroups shown in Figure~\ref{fig:all_de} to the maximal $\avgfid$ marginal subgroup for the same feature. In about half of the cases, the $\avgfid$ of the marginal subgroup was similar to the rich subgroup. In the other cases, expanding our subgroup classes to include rich subgroups defined by linear functions of the sensitive attributes enabled us to find a subgroup that had a higher $\avgfid$. For example, in Figure~\ref{fig:rich_marg_compas}, we can see that on the COMPAS R dataset using GRAD as the importance notion, \texttt{Arrested, No Charges} had a rich subgroup with $\avgfid$ that was $4$ times less than on the full dataset. However, we were unable to find any subgroup in the marginal space where the importance of the feature was nearly as different (comparisons for all datasets are available in Appendix~\ref{sec:rich_marg}). In some cases, the marginal subgroup performs slightly better than the rich subgroup (Figure~\ref{fig:rich_marg}). This happens when using rich subgroups does not offer any substantial advantage over marginal subgroups, and the empirical error tolerance in Algorithm~\ref{alg:cap} stops the convergence early.

Perhaps an even more important practical advantage of Algorithm~\ref{alg:cap}'s ability to optimize over rich subgroups, is that it allows protected subgroups to be defined as functions of continuous variables. For example, \texttt{age} is easily included in our formulation, while capturing \texttt{age} with marginal subgroups requires first bucketing into age groups and then one-hot encoding these groups, which comes with statistical, explanatory, and computational drawbacks. As mentioned in Section~\ref{sec:preliminaries}, our framework also allows individuals to be part of multiple groups, for a example a multiracial individual who might be better represented as a fractional member of different racial groups, rather than a member of a single discrete one. This kind of data would be impossible to capture with marginal subgroups.

\begin{figure*}
    \centering
    \begin{subfigure}{.22\textwidth}
        \centering
        \includegraphics[width=\linewidth]{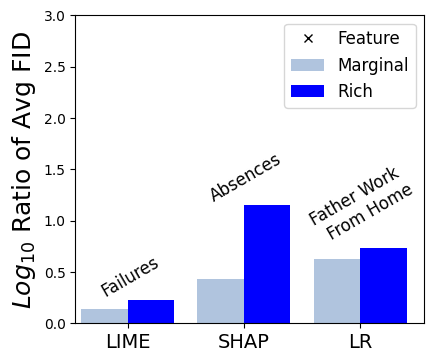}
        \caption{Student}
        \label{fig:rich_marg_student}
    \end{subfigure}
    \begin{subfigure}{.27\textwidth}
        \centering
        \includegraphics[width=\linewidth]{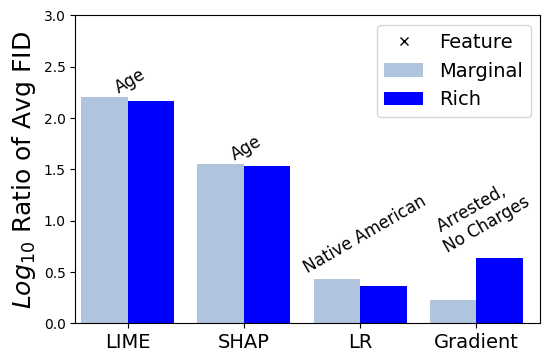}
        \caption{COMPAS R}
        \label{fig:rich_marg_compas}
    \end{subfigure}
\caption{Comparisons of some maximal $\fid$ rich subgroups to the maximal $\fid$ marginal subgroup on the same feature using the same log-scale as in Figure~\ref{fig:all_de}. The feature associated with the subgroups is written above each bar.}
\label{fig:rich_marg_main}
\end{figure*}

\begin{table*}[h]
\centering
  \caption{Fairness metrics of high $\avgfid$ subgroups. COMPAS D and Student were excluded since they use non-binary $y$, making classification metrics less comparable. We measured the $3$ fairness types outlined by \cite{fairml}: $P(\hat{Y}=1)$, true/false positive rates, and expected calibration error. $\textit{<metric>}_{\Delta}$ is the metric on $g$ minus the metric on $X$.}
  \label{tab:fair_compare}
  \scalebox{.8}{
  \begin{tabular}{llllcccc}
    \toprule
    Dataset & Notion & $F$ & Defining Features of $g$ & $\hat{Y}_{\Delta}$ & $TPR_{\Delta}$ & $FPR_{\Delta}$ & $ECE_{\Delta}$ \\
    \midrule
    COMPAS R & LIME & Age & Native-American & $-.16$ & $-.2$ & $-.07$ & $.24$ \\
     & SHAP & Age & Asian-American & $.37$ & $.26$ & $.4$ & $-.12$ \\
     & GRAD & Arrest, No Charge & Native-American & $-.24$ & $-.35$ & $-.12$ & $.37$ \\
    Bank & LIME & \# of Employees & Marital Status & $.11$ & $.08$ & $.07$ & $-.15$ \\
     & SHAP & Euribor Rate & Marital Status & $.11$ & $.08$ & $.07$ & $-.17$ \\
    Folktables & LIME & Age & Marital Status & $-.15$ & $-.09$ & $-.07$ & $.19$ \\
     & SHAP & Education & Asian-American & $.17$ & $.2$ & $.05$ & $-.09$ \\
  \bottomrule
\end{tabular}}
\end{table*}

\begin{table*}[!h]
\centering
  \caption{Comparing top features and respective $\avgfid$ of $g$ found via our method ($\avgfid_{FID}$) and found by \cite{gerry} ($\avgfid_{gerry}$). As in Table~\ref{tab:fair_compare}, COMPAS D and Student were excluded.}
  \label{tab:gerry_compare}
  \scalebox{.8}{
  \begin{tabular}{lllclc}
    \toprule
    Dataset & Notion & $F_{FID}$ & $\avgfid_{FID}$ & $F_{gerry}$ & $\avgfid_{gerry}$ \\
    \midrule
    COMPAS R & LIME & Age & $.14$ & Age & $.04$ \\
     & SHAP & Age & $.4$ & Age & $.06$\\
     & GRAD & Arrest, No Charge & $.09$ & Male & $.03$ \\
    Bank & LIME & \# of Employees & $.03$ & \# of Employees & $.008$ \\
     & SHAP & Euribor Rate & $.016$ & Emp Var Rate & $.004$ \\
    Folktables & LIME & Age & $.11$ & Age & $.05$ \\
     & SHAP & Education & $.13$ & Age & $.05$ \\
  \bottomrule
\end{tabular}}
\end{table*}

\subsection{Fairness Metrics}
\label{subsec:fairness}
While large $\avgfid$ values with respect to a given feature and importance notion do not guarantee disparities in common fairness metrics, which are not typically defined in terms of a specific reference feature, it is natural to ask if these notions are correlated: do subgroups with large $\avgfid$ have large disparities in fairness metrics, and do subgroups that have large disparities in fairness metrics have particularly large $\avgfid$ values for some feature?

We examine the first question in Table~\ref{tab:fair_compare}. We find that these high $\avgfid$ subgroups tend to have significant disparities in traditional fairness metrics. Although the metrics are not always \emph{worse} on $g$, this reinforces the intuition that subgroups with high $\avgfid$ require greater scrutiny. In Table~\ref{tab:gerry_compare}, we study the converse question, where we use the GerryFair code of \cite{gerry} to find rich subgroups that maximally violate FPR disparity, and then compute the $\avgfid$ on those subgroups. We find that they also have features with high $\avgfid$, albeit not as large as those found by Algorithm~\ref{alg:cap}, which explicitly optimizes for $\avgfid$. These two results highlight the usefulness of our method in identifying potentially high risk subgroups.


\section{Discussion}
In this paper we establish feature importance disparities as an important tool to aid in $\texttt{DBI}$s. One benefit to our work, is that as progress is made in feature importance methods, they can be leveraged in our Algorithm~\ref{alg:cap} (if they are separable). As we discuss, given observed feature/subgroup disparities, there are many possible next steps an analyst could take to root out dataset bias, from interventions in the training or feature selection process, to more involved investigations into how the data was collected. This perspective is complementary to recent work in the fairness literature that focuses on understanding and documenting data sources \cite{datasheets, fair_data}. On the algorithms side, there has been little work to systematically identify the sources of bias in the data, as opposed to developing methods that remove bias during model training or representation learning. This paper represents one attempt at developing this kind of ``data-centric'' fairness method through the lens of feature importance. There is much more work that can be done to determine what the right subset of features to collect are, how to detect and mitigate biases in the data collection process, and which subsets of data or features should be retained prior to model training.


\section*{Acknowledgements}

The authors would like to thank Cynthia Dwork for the initial conversations that motivated the problem direction.



\section*{Impact Statement}
This paper presents work whose goal is to empower practitioners to investigate potential sources of bias in common datasets used for machine learning. As such, this work has the potential to contribute to fairer and more democratized algorithmic decision making across many application domains. We emphasize, as is a common refrain in the related body of literature, that no method is a silver bullet towards mitigating bias in ML systems. 

\clearpage
\bibliography{references}
\bibliographystyle{icml2024}

\newpage
\appendix
\onecolumn

\newpage
\appendix

\newpage
\section{Limitations}
\label{sec:limitations}
Importantly, we eschew any broader claims that large $\fid$ \emph{necessarily} implies a mathematical conclusion about the \emph{fairness} of the underlying classification model in all cases. It is known that even the most popular and natural fairness metrics are impossible to satisfy simultaneously, and so we would run up against the problem of determining what it means for a model to be \emph{fair} \cite{chouldechova, kleinberg}. By detecting anomalous subgroups with respect to feature importance, our approach can signal to a domain expert that perhaps there are issues such as feature collection or measurement bias. This will facilitate the next steps of testing the resulting hypotheses, and ultimately intervening to address disparities and improve fairness outcomes. Concerns about the stability and robustness of the most widely used feature importance notions, including the ones we study, have been raised \cite{Dai_2022, agarwal2022rethinking, alvarezmelis2018robustness, bansal2020sam, Dimanov2020YouST, slack2020fooling} and these notions are often at odds with each other, so none can be considered definitive \cite{disagree}. Regardless of these limitations, these notions are used widely in practice today, and are still useful as a diagnostic tool as we propose here in order to uncover potentially interesting biases. Lastly, our methods, like nearly all prior works on fairness, require tabular datasets that have defined the sensitive features apriori, a process more difficult in text or image datasets where bias is still a concern \cite{gendershades, home}. Overall, the methods developed here represent a part of the algorithmic toolkit that domain experts may use in rooting out bias.



\section{Reproducibility}
\label{sec:reproducibility}

Specific details for the experiments such as the hyperparameters used are available in Appendix~\ref{sec:details_app}. The source code used for these experiments is provided in the supplementary material. Specifically, \texttt{run\_separable.py} and \texttt{run\_linear.py} are the scripts where the importance notion (Appendix~\ref{subsec:fid_notions}), dataset (Appendix~\ref{subsec:datasets}), and other parameters are specified before running. The \texttt{experiments/} directory contains scripts used for the comparison of rich and marginal subgroups as seen in Appendix~\ref{sec:rich_marg} and for the fairness comparison experiments in Subsection~\ref{subsec:fairness}.

\section{Additional Related Work}
\label{sec:rel_work_app}

\textbf{Fairness in Machine Learning}.
Much of the work in fairness in machine learning typically concerns the implementation of a new fairness notion in a given learning setting; either an individual fairness notion \cite{fairness_through_awareness, meritocratic}, one based on equalizing a statistical rate across protected subgroups \cite{moritz, calib2}, or one based on an underlying causal model \cite{causal}. With a given notion of fairness in hand, approaches to learning fair classifiers can be typically classified as ``in-processing", or trying to simultaneously learn a classifier and satisfy a fairness constraint, ``post-processing" which takes a learned classifier and post-processes it to satisfy a fairness definition \cite{moritz}, or most closely related to the motivation behind this paper, pre-processing the data to remove bias. Existing work on dataset bias serve as high level motivation for our work.

\textbf{Feature Importance Notions}. 
The local explanation methods mentioned in Section~\ref{sec:related} include model-agnostic methods like LIME or SHAP \cite{lime, shap}, methods like saliency maps \cite{saliency, saliency2, saliency3} that require $h$ to be differentiable in $x$, or model-specific methods that depend on the classifier. In addition to these explanation methods, there are also global methods that attempt to explain the entire model behavior and so can be run on the entire subgroup. Our $\linfid$ method as described in Appendix~\ref{sec:lin_fid} is a global method that relies on training an inherently interpretable model (linear regression) on the subgroup and inspecting its coefficients. Other inherently interpretable models that could be used to define a notion of subgroup importance include decision trees \cite{trees} and generalized additive models \cite{gam}.

\textbf{Fairness and Interpretability}. Although no existing work examines the role of feature importance notions in detecting disparities in rich subgroups, there is a small amount of existing work examining explainability in the context of fairness. The recent \cite{marry} formalizes induced discrimination as a function of the SHAP values assigned to sensitive features, and proposes a method to learn classifiers where the protected attributes have low influence. \cite{exp_fair} applies a similar approach, attributing a models overall unfairness to its individual features using the Shapley value, and proposing an intervention to improve fairness. \cite{recid_fair} examines machine learning models to predict recidivism, and empirically shows tradeoffs between model accuracy, fairness, and interpretability.

Additionally, \cite{lundberg2020explaining} decomposes feature attribution explanations and fairness metrics into additive components and observes the relationship between the fairness metrics and input features. Our work does not try to decompose fairness metrics into additive components and also focuses on non-additive feature explanations. Furthermore, our consideration of rich subgroups is a novel addition to the space.

\section{Computing the Optimal Subgroup}
\label{sec:proof_app}
\begin{algorithm}[!ht]
\caption{Iterative Constrained Optimization}
\label{alg:cap}
\begin{algorithmic}[1]
\STATE \textbf{Input:} Dataset $X^n, |X^n| = n$, hypothesis $h$, feature of interest $f_j$, separable feature importance function $F$, size constraints $\alpha_L$ and $\alpha_U$, size violation indicators $\Phi_L$ and $\Phi_U$, size penalty bound $B$, CSC oracle for $\mathcal{G}$, $CSC_{\mathcal{G}}(c^0,c^1)$,  accuracy $\nu$.
\STATE \textbf{Initialize:}
\STATE 
Feature importance vector $\mathbf{C}=(F(f_j, X_i, h))_{i=1}^n$

\STATE Gradient weight parameter $\theta_1 = (0,0)$

\STATE Learning rate $\eta = \frac{\nu}{2n^2B}$

\FOR{$t=1,2,...$}

\STATE \textit{\# Exponentiated gradient weights}
\STATE $\lambda_{t,0}=B\frac{exp(\theta_{t,0})}{1+exp(\theta_{t,1})}$, \quad $\lambda_{t,1}=B\frac{exp(\theta_{t,1})}{1+exp(\theta_{t,0})}$
\STATE  

\STATE \textit{\# Costs vector}
\STATE $c_t^1 = (\mathbf{C}_i-\lambda_{t,0}+\lambda_{t,1})_{i=1}^n$
\STATE  

\STATE \textit{\# Get g with max disparity computed via CSC oracle}
\STATE $g_t = CSC_{\mathcal{G}} (\mathbf{0},c_t^1)$ 
\STATE  

\STATE \textit{\# Compute Lagrangian}
\STATE $\hat{p}_{\mathcal{G}}^{t} = \frac{1}{t}\sum_{t'=1}^{t}g_{t'}$
\STATE $\lambda_t' = (B \Phi_L (\hat{p}_{\mathcal{G}}^{t}), B \Phi_U (\hat{p}_{\mathcal{G}}^{t}))$
\STATE $\overline{L} = L(\hat{p}_{\mathcal{G}}^{t}, \lambda_t')$

\STATE

\STATE $\hat{p}_{\lambda}^{t} = \frac{1}{t}\sum_{t'=1}^{t}(\lambda_{t', 0}, \lambda_{t', 1})$
\STATE $g_t' = CSC_{\mathcal{G}} (\mathbf{0}, (\mathbf{C}_i - \hat{p}_{\lambda_0}^{t} + \hat{p}_{\lambda_1}^{t})_{i=1}^n)$
\STATE $\underline{L} = L(g_t', \hat{p}_{\lambda}^{t})$

\STATE

\STATE $v_t = \max \big( \lvert L(\hat{p}_{\mathcal{G}}^{t}, \hat{p}_{\lambda}^{t}) - \underline{L} \rvert, \lvert \overline{L} -  L(\hat{p}_{\mathcal{G}}^{t}, \hat{p}_{\lambda}^{t})\rvert \big)$ 
\STATE  

\STATE \textit{\# Check termination condition}
\IF{$v_t \leq v$}
\STATE Return $\hat{p}_{\mathcal{G}}^{t}, \hat{p}_{\lambda}^{t}$
\ENDIF
\STATE  

\STATE \textit{\# Exponentiated gradient update}
\STATE Set $\theta_{t+1} = \theta_t + \eta (\alpha_L-|g_t|, |g_t|-\alpha_U)$ 

\ENDFOR
\end{algorithmic}
\end{algorithm}

We start by showing that for the unconstrained problem, computing the subgroup $g_j^{*}$ that maximizes $\fid(f_j, g, h)$ over $\mathcal{G}$ can be computed in two calls to $\cscg$ when $F$ is separable. 
\begin{lemma}
\label{lem:sep}
If $F$ is separable and $\text{CSC}_{\mathcal{G}}$ is a CSC oracle for $\mathcal{G}$, then for any feature $f_j$, $g_{j}^*$ can be computed with two oracle calls. 
\end{lemma}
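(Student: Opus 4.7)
The plan is to exploit separability to turn the $\fid$ maximization into two linear optimization problems over $\mathcal{G}$, each solvable by a single $\cscg$ call. Since $F(f_j, X^n, h) = \sum_{X \in X^n} F'(f_j, X, h)$ does not depend on $g$, call this constant $C$. By separability, $F(f_j, g, h) = \sum_{X \in X^n} g(X) F'(f_j, X, h)$ is linear in $g$, so $|F(f_j, g, h) - C|$ is the pointwise maximum of two linear functions of $g$, and its global maximum is attained either by maximizing or by minimizing the inner linear expression.

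I would therefore split into a ``positive'' and a ``negative'' case. To maximize $\sum_X g(X) F'(f_j, X, h)$, I would invoke $\cscg$ with $c^0 = \mathbf{0}$ and $c^1_i = -F'(f_j, X_i, h)$, since the oracle by definition returns a $g \in \mathcal{G}$ minimizing $\sum_i [g(X_i) c^1_i + (1 - g(X_i)) c^0_i]$, which with these costs equals $-\sum_i g(X_i) F'(f_j, X_i, h)$; call the returned subgroup $g^+$. Symmetrically, to minimize the same quantity I would make a second call with $c^0 = \mathbf{0}$ and $c^1_i = +F'(f_j, X_i, h)$, yielding $g^-$.

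Finally, I would output $g_j^\ast \in \{g^+, g^-\}$, whichever gives the larger value of $|F(f_j, g, h) - C|$. Optimality follows from the identity $\max_{g \in \mathcal{G}} |F(f_j, g, h) - C| = \max\bigl(\max_{g \in \mathcal{G}} F(f_j, g, h) - C,\; C - \min_{g \in \mathcal{G}} F(f_j, g, h)\bigr)$, whose two inner extrema are precisely what $g^+$ and $g^-$ achieve. The work outside the two oracle calls is $O(n)$ to construct the cost vectors and evaluate the two candidates. There is no substantive obstacle; the only point requiring care is the sign convention on the cost vector so that the CSC oracle's minimization corresponds to maximizing $F(f_j, g, h)$. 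The lemma is thus an immediate reduction from separable $\fid$ maximization to two black-box calls to $\cscg$.
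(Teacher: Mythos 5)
Your proposal is correct and matches the paper's proof essentially step for step: both reduce $\max_{g}\lvert F(f_j,g,h) - F(f_j,X^n,h)\rvert$ to checking the maximizer $g^{+}$ and minimizer $g^{-}$ of the linear functional $\sum_i g(X_i)F'(f_j,X_i,h)$, each obtained by one $\cscg$ call with $c^0 = \mathbf{0}$ and $c^1_i = \mp F'(f_j,X_i,h)$. The only difference is that you spell out the sign convention and the final comparison step slightly more explicitly than the paper does.
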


\begin{proof}
By definition $g^*_j = \argmax_{g \in \mathcal{G}}\fid(j,g) = \argmax_{g \in \mathcal{G}}|F(f_j, X^n, h) - F(f_j, g, h)| = \argmax_{g \in \{g^{+}, g^-\}}\fid(j,g)$, where $g^{+} = \argmax_{g \in \mathcal{G}}F(f_j, g, h), g^{-} = \argmin_{g \in \mathcal{G}}F(f_j, g, h)$. 
By the definition of separability, we can write $$ F(f_j,g(X^n),h) = \sum_{X \in g(X^n)}F'(f_j,X,h) = 
\sum_{i = 1}^{n}g(X_i)F'(f_j,X_i,h)$$
Then letting $c_k^{0} = 0$ and $c_k^1 = -F'(f_j,X_k,h)$ for $k = 1, \ldots n$, we see that $g^{+} = \text{CSC}_g((c_k^{0}, c_k^{1})), g^{-} = \text{CSC}_g((c_k^{0}, -c_k^{1}))$. This establishes the claim. 
\end{proof}

Theorem~\ref{thm:sep}: Let $F$ be a separable notion, fix a classifier $h$, subgroup class $\mathcal{G}$, and oracle $\text{CSC}_{\mathcal{G}}$. Then fixing a feature of interest $f_j$,  we will run Algorithm~\ref{alg:cap} twice; once with $\fid$ given by $F$, and once with $\fid$ given by $-F$. Let $\hat{p}_{\mathcal{G}}^{T}$ be the distribution returned after $T = O(\frac{4n^2 B^2}{\nu^2})$ iterations by Algorithm~\ref{alg:cap} that achieves the larger value of $\mathbb{E}[\fid(j,g)]$. Then:  

\begin{equation}
\begin{aligned}
 & \fid(j, g_j^{*})-  \mathbb{E}_{g \sim \hat{p}_{\mathcal{G}}^{T}}[\fid(j,g)] \leq \nu \\
 & \quad \lvert \Phi_L(g) \rvert, \lvert \Phi_U(g) \rvert  \leq  \frac{1 + 2\nu}{B}
\end{aligned}
\end{equation}

\begin{proof}
We start by transforming our constrained optimization into optimizing a $\min-\max$ objective. The $\min$ player, referred to as the \emph{subgroup player} will be solving a CSC problem over the class $\mathcal{G}$ at each iteration, while the $\max$ player, called the \emph{dual player}, will be adjusting the dual weights $\lambda$ on the two constraints using the exponentiated gradient algorithm \cite{expgrad}. By Lemma~\ref{lem:freund} \cite{freund}, we know that if each player implements a \emph{no-regret} strategy, then the error of subgroup found after $T$ rounds is sub-optimal by at most the average cumulative regret of both players. The regret bound for the exponentiated gradient descent ensures this occurs in \emph{poly(n)} rounds. 

As in \cite{gerry, msr}, we first relax Equation~\ref{eq:constr} to optimize over all \emph{distributions} over subgroups, and we enforce that our constraints hold in expectation over this distribution.  Our new optimization problem becomes: 

\begin{equation}
\label{eq:constr_prob}
\begin{aligned}
\min_{p_g \in \Delta(\mathcal{G})} \quad & \mathbb{E}_{g \sim p_g}[\sum_{i = 1}^{n}g(x_i)F'(f_j,x_i,h)]\\
\textrm{s.t.} \quad & \mathbb{E}_{g \sim p_g}[\Phi_L(g)] \leq 0\\
\quad &  \mathbb{E}_{g \sim p_g}[\Phi_U(g)] \leq 0 \\
\end{aligned}
\end{equation}

We note that while $|\mathcal{G}|$ may be infinite, the number of distinct labelings of $X$ by elements of $\mathcal{G}$ is finite; we denote the number of these by $|\mathcal{G}(X)|$. Then since Equation~\ref{eq:constr_prob} is a finite linear program in $|\mathcal{G}(X)|$ variables, it satisfies strong duality, and we can write:

\begin{equation*}
\begin{aligned}
(p_g^{*}, \lambda^{*}) &= \argmin_{p_g \in \Delta(\mathcal{G})}\argmax_{\lambda \in \Lambda}\mathbb{E}_{g \sim p_g}[L(g, \lambda)] = \argmin_{p_g \in \Delta(\mathcal{G})}\argmax_{\lambda \in \Lambda}L(p_g, \lambda) \\
\textrm{with} \quad  L(g, \lambda) &= \sum_{x \in X} g(x) F(f_j,x,h) + \lambda_L \Phi_L + \lambda_U \Phi_U, \quad L(p_g, \lambda) = \mathbb{E}_{g \sim p_g}[L(g, \lambda)] \\
\end{aligned}
\end{equation*}
As in \cite{gerry} $\Lambda = \{ \lambda \in \mathbb{R}^2 \mid \lVert \lambda \rVert_1 \leq B \}$ is chosen to make the domain compact, and does not change the optimal parameters as long as $B$ is sufficiently large, i.e. $\lVert \lambda^{*} \rVert_1 \leq B$. In practice, this is a hyperparameter of Algorithm~\ref{alg:cap}, similar to \cite{msr, gerry}. 
Then we follow the development in \cite{msr, gerry} to show that we can compute $(p_g^{*}, \lambda^{*})$ efficiently by implementing \emph{no-regret} strategies for the subgroup player ($p_g$) and the dual player ($\lambda$). 

Formally, since $\mathbb{E}_{g \sim p_g}[L(g, \Lambda)]$ is bi-linear in $p_g, \lambda$, and $\Lambda, \Delta(\mathcal{G})$ are convex and compact, by Sion's minimax theorem \cite{sion}: 

\begin{equation}
\label{eq:minmax}
    \min_{p_g \in \Delta(G)} \max_{\lambda \in \Lambda} L(p_g, \lambda) = \max_{\lambda \in \Lambda} \min_{p_g \in \Delta(G)} L(p_g, \lambda) = \text{OPT}
\end{equation}

Then by Theorem 4.5 in \cite{gerry}, we know that if $(p_{g}^*, \lambda^*)$ is a $\nu$-approximate min-max solution to Equation~\ref{eq:minmax} in the sense that
\begin{equation}
\begin{aligned}
\text{if:} \quad L(p_g^*, \lambda^*) \leq \min_{p \in \Delta(\mathcal{G})}L(p, \lambda^*) + \nu, L(p_g, \lambda) \geq \max_{\lambda \in \Lambda}L(p^*, \lambda), \\
\text{then:} \quad F(f_j, p_g^*, h) \leq OPT + 2\nu, \quad \lvert \Phi_L(g) \rvert, \lvert \Phi_U(g) \rvert \leq  \frac{1 + 2\nu}{B}
\end{aligned}
\end{equation}

So in order to compute an approximately optimal subgroup distribution $p_g^*$, it suffices to compute an approximate min-max solution of Equation~\ref{eq:minmax}. In order to do that we rely on the classic result of \cite{freund} that states that if the subgroup player best responds, and if the dual player achieves low regret, then as the average regret converges to zero, so does the sub-optimality of the average strategies found so far. 

\begin{lemma}[\cite{freund}]
\label{lem:freund}
Let $p^\lambda_1, \ldots p^{\lambda}_T$ be a sequence of distributions over $\Lambda$, played by the dual player, and let $g^{1}, \ldots g^{T}$ be the subgroup players best responses against these distributions respectively. Let $\hat{\lambda}_T = \frac{1}{T}\sum_{t=1}^{T}p^{\lambda}_t, \hat{p}_g = \frac{1}{T}\sum_{t = 1}^{T}g_t$.
Then if 
$$
\sum_{t = 1}^{T}\mathbb{E}_{\lambda \sim p^{\lambda}_t}[L(g_t, \lambda)] - \min_{\lambda \in \Lambda}\sum_{t=1}^{T}[L(g_t, \lambda)] \leq \nu T,
$$
Then $(\hat{\lambda}_T, \hat{p}_g)$ is a $\nu$-approximate minimax equilibrium of the game.
\end{lemma}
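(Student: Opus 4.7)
The plan is to follow the classical Freund--Schapire reduction from low regret to approximate minimax equilibrium, exploiting the bilinearity of $L$. I would first record two averaging identities that come immediately from the fact that $\hat p_g = \frac{1}{T}\sum_{t=1}^T g_t$ and $\hat \lambda_T = \frac{1}{T}\sum_{t=1}^T p_t^{\lambda}$ are uniform mixtures, together with the linearity of $L(g,\lambda)$ in $\lambda$ through the dual terms $\lambda_L \Phi_L + \lambda_U \Phi_U$ and of $L(p_g, \lambda) = \mathbb{E}_{g \sim p_g}[L(g,\lambda)]$ in $p_g$: namely $L(\hat p_g, \lambda) = \frac{1}{T}\sum_t L(g_t, \lambda)$ for every $\lambda \in \Lambda$, and $L(g, \hat \lambda_T) = \frac{1}{T}\sum_t \mathbb{E}_{\lambda \sim p_t^{\lambda}}[L(g, \lambda)]$ for every $g \in \mathcal{G}$. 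These let me pass back and forth between cumulative sums over the history of play and single Lagrangian evaluations at the averaged strategies.

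With the identities in hand, I would translate each player's guarantee into a one-sided bound on the averaged Lagrangian. The subgroup player's best-response property $\mathbb{E}_{\lambda \sim p_t^{\lambda}}[L(g_t, \lambda)] \leq \mathbb{E}_{\lambda \sim p_t^{\lambda}}[L(g, \lambda)]$ for every $g$ and every $t$, summed over $t$ and then minimized over $g$, yields $\frac{1}{T}\sum_t \mathbb{E}_{\lambda \sim p_t^{\lambda}}[L(g_t, \lambda)] \leq \min_{g \in \mathcal{G}} L(g, \hat \lambda_T)$. The dual no-regret hypothesis, once read in the sign convention consistent with the fact that the dual player is the maximizer in $\min_{p_g}\max_\lambda L(p_g, \lambda)$, combines with the first averaging identity to give a matching upper bound $\max_{\lambda \in \Lambda} L(\hat p_g, \lambda) \leq \frac{1}{T}\sum_t \mathbb{E}_{\lambda \sim p_t^{\lambda}}[L(g_t, \lambda)] + \nu$.

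Chaining these two bounds together with the trivial sandwich $\min_g L(g, \hat \lambda_T) \leq L(\hat p_g, \hat \lambda_T) \leq \max_\lambda L(\hat p_g, \lambda)$ produces $\max_\lambda L(\hat p_g, \lambda) - \min_g L(g, \hat \lambda_T) \leq \nu$, from which both defining inequalities of a $\nu$-approximate minimax equilibrium follow immediately: $L(\hat p_g, \hat \lambda_T) \geq \max_\lambda L(\hat p_g, \lambda) - \nu$ rules out the dual player improving by more than $\nu$, while $L(\hat p_g, \hat \lambda_T) \leq \min_g L(g, \hat \lambda_T) + \nu$ rules out the subgroup player improving by more than $\nu$. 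I expect the main obstacle to be essentially notational, namely aligning the direction of the stated regret inequality with the role of the dual player as maximizer so that the hypothesis indeed produces an upper bound on $\max_\lambda L(\hat p_g, \lambda)$; once that convention is fixed, every remaining step is a single invocation of linearity, and no step beyond elementary arithmetic is required.
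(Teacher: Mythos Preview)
The paper does not actually prove this lemma; it is stated with a citation to \cite{freund} and invoked as a black box inside the proof of Theorem~\ref{thm:sep}. Your proposal is the standard Freund--Schapire argument and is correct in substance: bilinearity of $L$ gives the two averaging identities, best response of the min (subgroup) player gives $\frac{1}{T}\sum_t \mathbb{E}_{\lambda\sim p_t^\lambda}[L(g_t,\lambda)] \le \min_g L(g,\hat\lambda_T)$, low regret of the max (dual) player gives $\max_\lambda L(\hat p_g,\lambda) \le \frac{1}{T}\sum_t \mathbb{E}_{\lambda\sim p_t^\lambda}[L(g_t,\lambda)] + \nu$, and chaining through $L(\hat p_g,\hat\lambda_T)$ yields both sides of the $\nu$-approximate equilibrium.

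You are right to flag the sign convention as the only real obstacle. As written in the paper, the hypothesis uses $\min_{\lambda}$ rather than $\max_{\lambda}$, which is the regret of a \emph{minimizing} $\lambda$-player; but the surrounding text explicitly identifies the dual player as the $\max$ player and the subgroup player as the $\min$ player in $\min_{p_g}\max_{\lambda} L$. With the $\min_\lambda$ version the stated inequality is automatically nonnegative but does not by itself control $\max_\lambda L(\hat p_g,\lambda)$, so the lemma only goes through under the intended reading with $\max_\lambda$. Your plan to fix the convention first and then run the linearity argument is exactly what is needed; there is nothing further to compare against, since the paper supplies no proof of its own.
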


To establish Theorem~\ref{thm:sep}, we need to show (i) that we can efficiently implement the subgroup players best response using $\text{CSC}_{\mathcal{G}}$ and (ii) we need to translate the regret bound for the dual players best response into a statement about optimality, using Lemma~\ref{lem:freund}. Establishing $(i)$ is immediate, since at each round $t$, if $\lambda_{t, 0} = \mathbb{E}_{p^{\lambda}_t}[\lambda_L], \lambda_{t, 1} = \mathbb{E}_{p^{\lambda}_t}[\lambda_U]$, then the best response problem is:

$$\argmin_{p_g \in \Delta(G)} \mathbb{E}_{g \sim p_g}[\sum_{x \in X} g(x) F(f_j,x,h) + \lambda_{t, 0} \Phi_L + \lambda_{t, 1} \Phi_U]$$

Which can further be simplified to:
\begin{equation}
    \argmin_{g \in G} \sum_{x \in X} g(x) (F(f_j,x,h) - \lambda_L + \lambda_U)
\end{equation}
This can be computed with a single call of $\text{CSC}_{\mathcal{G}}$, as desired.  To establish (ii), the no-regret algorithm for the dual player's distributions, we note that at each round the dual player is playing online linear optimization over $2$ dimensions. Algorithm~\ref{alg:cap} implements the exponentiated gradient algorithm \cite{expgrad}, which has the following guarantee proven in Theorem $1$ of \cite{msr}, which follows easily from the regret bound of exponentiated gradient \cite{expgrad}, and Lemma~\ref{lem:freund}:
\begin{lemma}[\cite{msr}]
Setting $\eta = \frac{\nu}{2n^2B}$, Algorithm~\ref{alg:cap} returns $\hat{p}_{\lambda}^{T}$ that is a $\nu$-approximate min-max point in at most $O(\frac{4n^2 B^2}{\nu^2})$ iterations. 
\end{lemma}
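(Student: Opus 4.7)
The plan is to follow the ``oracle-efficient'' Lagrangian framework of \cite{msr, gerry}: relax the constrained optimization to distributions over subgroups, invoke strong duality to obtain a min-max problem, and then show that a no-regret dynamic between a subgroup player and a dual player converges to an approximate saddle point, which in turn yields the desired $\fid$ and constraint-violation bounds. The two separate runs of Algorithm~\ref{alg:cap} (once with $F$, once with $-F$) are needed because the $\fid$ involves an absolute value; taking the larger of the two outputs suffices, exactly as in Lemma~\ref{lem:sep}.

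First I would handle the unconstrained reduction: by separability, $F(f_j, g, h) = \sum_{i=1}^{n} g(X_i) F'(f_j, X_i, h)$, so signed maximization over $g \in \mathcal{G}$ is a single CSC problem with cost vector $c^1_i = \pm F'(f_j, X_i, h)$; this gives Lemma~\ref{lem:sep} and, more importantly, shows that the subgroup player's best response at each round of the game is computable with one oracle call. Next I would lift Equation~\ref{eq:constr} to optimization over $p_g \in \Delta(\mathcal{G})$ with the constraints in expectation. Because only finitely many distinct labelings of $X^n$ arise from $\mathcal{G}$, the resulting program is a finite LP, so strong duality applies; alternatively, bilinearity in $(p_g, \lambda)$ plus compactness of $\Lambda = \{\lambda \in \mathbb{R}^2 : \|\lambda\|_1 \leq B\}$ lets one apply Sion's minimax theorem to interchange $\min$ and $\max$ of the Lagrangian $L(p_g, \lambda) = \mathbb{E}_{g \sim p_g}[F(f_j, g, h)] + \lambda_L \Phi_L + \lambda_U \Phi_U$.

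The core step is translating a $\nu$-approximate min-max equilibrium into the two claimed bounds. I would cite the standard fact (Theorem 4.5 in \cite{gerry}) that any $\nu$-approximate equilibrium $(\hat{p}_g, \hat{\lambda})$ satisfies $\fid(f_j, p^*_g, h) - \mathbb{E}_{g \sim \hat{p}_g}[\fid(f_j, g, h)] \leq 2\nu$ and $|\Phi_L(\hat{p}_g)|, |\Phi_U(\hat{p}_g)| \leq (1 + 2\nu)/B$, provided $B$ is chosen so that $\|\lambda^*\|_1 \leq B$. To produce such an equilibrium, I would appeal to Lemma~\ref{lem:freund} of \cite{freund}: if the dual player plays a no-regret strategy and the subgroup player best-responds, then the time-averaged plays form an approximate equilibrium, with suboptimality bounded by the average regret. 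Algorithm~\ref{alg:cap} instantiates exponentiated gradient for the dual player over the two-dimensional simplex (up to scale $B$), whose regret after $T$ rounds with learning rate $\eta = \nu/(2 n^2 B)$ is $O(n B \sqrt{\log 2 / T})$; setting $T = O(4 n^2 B^2 / \nu^2)$ drives the per-round regret below $\nu$. Combining this with the best-response guarantee for the subgroup player yields the theorem.

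The main obstacle is largely bookkeeping rather than conceptual: one must verify that the costs passed to $\cscg$ at each round indeed reproduce the Lagrangian best-response, check that the $L_\infty$-bound on per-round gradients (driven by the maximum magnitude of $\Phi_L, \Phi_U$, which is $O(1)$ since $g \in [0,1]$ and $|g|/n \in [0,1]$) interacts correctly with the $\eta$ and $B$ choices to give the stated regret, and confirm that the termination criterion $v_t \leq \nu$ (which estimates the duality gap via an explicit primal and dual best-response) certifies a $\nu$-approximate equilibrium at the iterate actually returned. Once these pieces are assembled, the final guarantee drops out by plugging the regret bound into Lemma~\ref{lem:freund} and then into Theorem 4.5 of \cite{gerry}, and taking the better of the two runs (on $F$ and on $-F$) to handle the absolute value in $\fid$.
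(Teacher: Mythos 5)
Your plan is the paper's plan: the lemma is established exactly as you describe, by running exponentiated gradient for the two-dimensional dual player, invoking its standard regret bound, and feeding the resulting average regret into the Freund--Schapire result (Lemma~\ref{lem:freund}) to certify that the time-averaged plays form a $\nu$-approximate equilibrium; the paper itself simply cites Theorem~1 of \cite{msr} for this step, so there is no deeper argument you are missing.

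One piece of your bookkeeping, however, is inconsistent with the lemma's stated constants. You assert the per-round gradients are $O(1)$ ``since $g \in [0,1]$ and $|g|/n \in [0,1]$,'' but Algorithm~\ref{alg:cap} updates with the \emph{unnormalized} violations $(\alpha_L - |g_t|,\ |g_t| - \alpha_U)$ where $|g_t| = \sum_i g_t(X_i) \in [0,n]$, so the relevant gradient bound is $O(n)$, not $O(1)$. This is precisely why $\eta = \nu/(2n^2 B)$ and $T = O(4n^2B^2/\nu^2)$ each carry an $n^2$; if the gradients were genuinely $O(1)$ you would instead derive $T = O(B^2/\nu^2)$ and a different learning rate, i.e.\ your derivation as written would not reproduce the lemma. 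The fix is only to track the unnormalized scale (or, equivalently, to normalize $\Phi_L,\Phi_U$ consistently in both the LP and the update), after which the regret calculation closes exactly as you outline.
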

Combining this result with Equation~\ref{eq:minmax} completes the proof.

\end{proof}

\section{Proof of $\avgfid$ Primitive}
\label{sec:avgsepfidproof_app}

In Section~\ref{sec:optimizing}, we presented our approach that optimizing for $\fid$ constrained across a range of subgroup sizes will allow us to efficiently optimize for $\avgfid$. We provide a more complete proof of that claim here:

Let $g^*$ be the subgroup that maximizes $\avgfid$. Without loss of generality, $g^*=\argmax_{g \in \mathcal{G}}\frac{1}{n|g|}\sum g(x)F'(f_j,X,h)$ (we drop the absolute value because we can also set $F'=-F$). Then it is necessarily true, that $g^*$ also solves the constrained optimization problem $\argmax_{g \in \mathcal{G}}\frac{1}{n}\sum g(x)F'(f_j,X,h)$ such that $|g|=|g^*|$, where we have dropped the normalizing term $\frac{1}{|g|}$ in the objective function, and so we are maximizing the constrained $\fid$. 

Now consider an interval $I = [|g^*| - \alpha, |g^*| + \alpha]$, and suppose we solve $g_I^* = \argmax_{g \in \mathcal{G}}\frac{1}{n}\sum g(x)F'(f_j,X,h)$ such that $g \in I$. Then since $g^* \in I$, we know that $\frac{1}{n}\sum g^* F'(f_j,X,h) \leq \frac{1}{n} \sum g_I^*(x)F'(f_j,X,h)$. This implies that:

\begin{align*}
    \avgfid (g_I^*) &\geq \frac{1}{|g_I^*|}\frac{1}{n} \sum g^*(x) F'(f_j,X,h) \\
      &= \avgfid (g^*) + (\frac{1}{|g_I^*|+\alpha}-\frac{1}{|g_I^*|})\fid (g^*) \\
      &= \avgfid (g^*) - \frac{\alpha}{|g^*|(|g^*|+\alpha)}\cdot \fid (g^*)
\end{align*}

Given the above derivation, as $\alpha \rightarrow 0$, we have $\avgfid (g_I^*) \rightarrow \avgfid (g^*)$.

Hence we can compute a subgroup $g$ that approximately optimizes the $\avgfid$ if we find an appropriately small interval $I$ aroudn $|g^*|$. Since the discretization in Section~\ref{sec:optimizing} covers the unit interval, we are guaranteed for sufficiently large $n$ to find such an interval.

\section{Cost Sensitive Classifier, $\cscg$}
\begin{definition}
\label{def:csc}
    (Cost Sensitive Classification) A Cost Sensitive Classification (CSC) problem for a hypothesis class $\mathcal{G}$ is given by a set of $n$ tuples $\{ (X_i, c_i^0, c_i^1) \}_{i=1}^n$, where $c_i^0$ and $c_i^1$ are the costs of assigning labels $0$ and $1$ to $X_i$ respectively. A CSC oracle finds the classifier $\hat{g} \in \mathcal{G}$ that minimizes the total cost across all points:
    \begin{equation}
        \hat{g} = \operatorname*{argmin}_{g\in \mathcal{G}}\sum_i \Big( g(X_i) c_i^1+(1-g(X_i)) c_i^0 \Big)
    \end{equation}
\end{definition}

\label{sec:cscg_app}
\begin{algorithm}[h]
\caption{$\text{CSC}_{\mathcal{G}}$}
\label{alg:csc}
\begin{algorithmic}
\STATE \textbf{Input:} Dataset $X \subset \mathbb{R}^{d_{sens}} \times \mathbb{R}^{d_{safe}}$, costs $(c^0, c^1) \in \mathbb{R}^n$
\STATE Let $X_{sens}$ consist of the sensitive attributes $x$ of each $(x, x') \in X$.
\STATE 
\STATE \textit{\# learn to predict the cost $c^0$}
\STATE Train linear regressor $r_0: \mathbb{R}^{d_{sens}} \to \mathbb{R}$ on dataset $(X_{sens}, c^0)$
\STATE 
\STATE \textit{\# learn to predict the cost $c^1$}
\STATE Train linear regressor $r_1: \mathbb{R}^{d_{sens}} \to \mathbb{R}$ on dataset $(X_{sens}, c^1)$
\STATE 
\STATE \textit{\# predict $0$ if the estimated $c_0 < c_1$}
\STATE Define $g((x,x')) \defeq \textbf{1}\{(r_0-r_1)(x) > 0\}$
\STATE Return $g$
\end{algorithmic}
\end{algorithm}

\section{NP-Completeness}
\label{sec:hardness_app}

We will show below that the fully general version of this problem (allowing any poly-time $F$) is NP complete. First, we will define a decision variant of the problem:

\begin{equation*}
     \delta_{X, F, h, A} = \max_{g\in \mathcal{G}, f_j} (|F(f_j, g, h) - F(f_j, X, h)|)\geq A
\end{equation*}

Note that a solution to the original problem trivially solves the decision variant. First, we will show the decision variant is in NP, then we will show it is NP hard via reduction to the max-cut problem.

\begin{lemma}
The decision version of this problem is in NP.
\end{lemma}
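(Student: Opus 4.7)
The plan is to exhibit a short certificate that an efficient verifier can check in polynomial time. The natural candidate certificate is a pair $(g, f_j)$, where $g \in \mathcal{G}$ describes the candidate subgroup and $f_j$ is the feature index whose importance is to be compared against the whole-dataset importance. For this to serve as a valid NP witness I need $g$ to admit a polynomial-size description; I plan to spell this out by noting that in our setting $g$ is characterized by the vector of membership values $(g(X_i))_{i=1}^n \in [0,1]^n$, which has size polynomial in $n$. More generally, any rich subgroup class we actually care about in the paper (e.g.\ linear-threshold functions of the sensitive attributes as in the cited rich subgroup literature) is presented with an efficient membership test and a polynomial-size encoding of each element, so the assumption is harmless.

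Given such a certificate, the verification algorithm proceeds in three steps, in the following order. First, confirm that the supplied $g$ is a syntactically valid element of $\mathcal{G}$, e.g.\ by checking that the parameter $\alpha$ lies in $\Omega$ or that $g(X_i) \in [0,1]$ for each $i$; this is an immediate syntactic check. Second, invoke the assumed polynomial-time routine for $F$ twice, once to compute $F(f_j, g, h)$ and once to compute $F(f_j, X, h)$. Third, compute the absolute difference and accept iff it is at least $A$. Each step runs in time polynomial in the input size, using the given poly-time procedure for $F$ as a subroutine, so the entire verifier is polynomial.

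The only subtle point, and the one I would flag as the main conceptual obstacle, is pinning down the input-size model precisely, since $\mathcal{G}$, $F$, and $h$ are themselves objects supplied as input. My plan is to dispense with this by taking the problem statement at face value: $F$ is assumed to be computable in polynomial time in $|X|$ by hypothesis, and $\mathcal{G}$ comes with a polynomial-size representation and an efficient membership test (otherwise the decision problem itself is not well-posed as an NP question). Under these standing assumptions the verifier described above is clearly polynomial, establishing membership in NP and leaving the stage set for the NP-hardness half of the argument via reduction from max-cut that presumably follows.
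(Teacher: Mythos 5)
Your proposal is correct and follows essentially the same route as the paper: the witness is the pair $(g, f_j)$, and the verifier evaluates $F(f_j,g,h)$ and $F(f_j,X,h)$ in polynomial time (by the assumption on $F$) and checks the absolute difference against $A$. The extra care you take about the polynomial-size encoding of $g$ and the input-size model is a reasonable tightening of a point the paper leaves implicit, but it does not change the argument.
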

\begin{proof}
Our witness will be the subset $g$ and feature $f_j$ such that 
\begin{equation*}
    (|F(f_j, g, h) - F(f_j, X, h)|)\geq A
\end{equation*}
Given these $2$, evaluation of the absolute value is polytime given that $F$ is polytime, so the solution can be verified in polytime.
\end{proof}

\begin{lemma}
The decision version of this problem is NP hard.
\end{lemma}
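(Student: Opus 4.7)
The plan is to give a polynomial-time reduction from the NP-hard MAX-CUT decision problem, exploiting the fact that the lemma's hypothesis places no structural requirement on $F$ beyond polynomial-time computability. Recall MAX-CUT: given an undirected graph $G = (V,E)$ and integer $k$, decide whether there exists $S \subseteq V$ such that at least $k$ edges of $E$ have exactly one endpoint in $S$. Since this class is known to be NP-hard, hardness of our decision problem will follow as soon as we can embed an arbitrary MAX-CUT instance into it.

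Concretely, given $(G,k)$ I would build the following instance. Let $X^n$ consist of $|V|$ data points, one per vertex, whose sensitive features are distinct identifiers, and let $\mathcal{G}$ be rich enough to realize the indicator of every subset $S \subseteq V$ (for example, all Boolean functions of the identifier). Pick an arbitrary fixed classifier $h$, an arbitrary feature $f_j$, and set the threshold $A \defeq k$. Define the feature importance notion
\begin{equation*}
F(f_j, g, h) \defeq \bigl|\{(u,v) \in E : g(u) \neq g(v)\}\bigr|,
\end{equation*}
which is polynomial-time computable with the edge set $E$ hard-coded into the description of $F$, and set $F(f_j, X^n, h) \defeq 0$ by convention (the full dataset is treated as a single side, cutting no edges). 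Then the decision question ``does there exist $g \in \mathcal{G}$ and feature $f_j$ with $|F(f_j,g,h) - F(f_j,X^n,h)| \geq A$?'' becomes exactly ``does there exist $S \subseteq V$ cutting at least $k$ edges?''. The reduction is clearly polynomial in $|V|+|E|$, so combined with the preceding in-NP lemma this yields NP-completeness.

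The main subtlety to flag is that the $F$ we constructed is adversarial and does not arise from any standard explainability method (LIME, SHAP, saliency, linear regression); in particular it is non-separable. This is not formally an obstacle to the lemma, since its statement explicitly allows any poly-time $F$, but it is the right place to emphasize that the hardness is coming from the combinatorial flexibility of $F$ rather than from the structure of $\mathcal{G}$ or $h$. In fact, this is precisely what makes the separability assumption of Definition~\ref{def:sep} and the oracle-efficient algorithm of Theorem~\ref{thm:sep} valuable: separable $F$ is linear in $g$, collapsing the problem to something tractable via $\cscg$, whereas dropping separability lets one encode MAX-CUT directly.
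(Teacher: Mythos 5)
Your reduction is correct and is essentially the same as the paper's: both encode a MAX-CUT instance $(Q,k)$ by taking one data point per vertex, letting $\mathcal{G}$ realize all vertex subsets, defining $F$ to count cut edges (ignoring $f_j$ and $h$), observing that the importance on the full dataset is $0$ so the absolute difference equals the cut size, and setting $A=k$. Your closing remark that the hardness stems from the combinatorial freedom in $F$ rather than from $\mathcal{G}$ or $h$ --- and that separability is exactly what restores tractability --- is a correct and worthwhile observation, though not part of the paper's argument.
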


\begin{proof}
We will define our variables to reduce our problem to $\textrm{maxcut}(Q, k)$.  Given a graph defined with $V,\ E$ as the vertex and edge sets of $Q$ (with edges defined as pairs of vertices), we will define our $F$, $X$, $G$, $A$, and $h$ as follows:
\begin{eqnarray*}
X &=& V \\
h &=& \textrm{constant classifier, maps every value to 1} \\
\mathcal{G}&=&\mathcal{P}(V) \; \textrm{i.e. all possible subsets of vertices} \\
F(f_j, g, h) &=& |{x\in E : x[0] \in g, x[1] \in g^c}| \\
& & \textrm{--i.e.}\ F(j, g, h)\ \textrm{returns the number of} \\
& & \textrm{edges cut by a particular subset, ignoring} \\
& & \textrm{its first and third argument.} \\
& & \textrm{(this is trivially computable in polynomial} \\
& & \textrm{time by iterating over the set of edges).} \\
A &=& k
\end{eqnarray*}

Note that $F(f_j, X, h) = 0$ by definition, and that $F \geq 0$. Therefore, $|F(f_j, g, h) - F(f_j, X, h)| = F(f_j, g, h)$, and we see that
$(|F(f_j, g, h) - F(f_j, X, h)|)\geq A$ if and only if $g$ is a subset on $Q$ that cuts at least $A=k$ edges.  Therefore an algorithm solving the decision variant of the feature importance problem also solves maxcut.
\end{proof}

\section{Linear Feature Importance Disparity}
\label{sec:lin_fid}

The \emph{non-separable} $\fid$ notion considered in this paper corresponds to training a model that is inherently interpretable on only the data in the subgroup $g$, and comparing the influence of feature $j$ to the influence when trained on the dataset as a whole. Since all of the points in the subgroup can interact to produce the interpretable model, this notions typically are not separable. Below we formalize this in the case of linear regression, which is the non-separable notion we investigate in the experiments. 

\begin{definition}
\label{def:lin}
(Linear Feature Importance Disparity).
Given a subgroup $g$, let $\theta_g = \inf_{\theta \in \mathbb{R}^d}\mathbb{E}_{(X, y) \sim \mathcal{R}}[g(X)(\theta' X - y)^2]$, and
$\theta_{\mathcal{R}} = \inf_{\theta \in \mathbb{R}^d}\mathbb{E}_{(X, y) \sim \mathcal{R}}[(\theta' X - y)^2]$. Then if $e_j$ is the $j^{th}$ basis vector in $\mathbb{R}^d$,
we define the \emph{linear feature importance disparity} ($\linfid$) by 
  $$\linfid(j, g) = \lvert (\theta_g - \theta_{\mathcal{R}}) \cdot e_j \rvert$$
\end{definition}

$\linfid(j,g)$ is defined as the difference between the coefficient for feature $j$ when training the model on the subgroup $g$, versus training the model on points from $\mathcal{R}$. Expanding Definition~\ref{def:lin} using the standard weighted least squares estimator (WLS), the feature importance for a given feature $f_j$ and subgroup $g(X)$ is:

\begin{equation}
\label{eq:flinear}
    \flin (j,g) = \big( (X g(X) X^T)^{-1}(X^T g(X) Y) \big) \cdot e_j,
\end{equation}

Where $g(X)$ is a diagonal matrix of the output of the subgroup function. The coefficients of the linear regression model on the dataset $X$ can be computed using the results from ordinary least squares (OLS): $(X X^T)^{-1}(X^T Y) \cdot e_j$. 

We compute $\argmax_{g \in G} \linfid = \argmax_{g \in G} \lvert \flin (j, X^n) - \flin (j, g) \rvert$ by finding the minimum and maximum values of $\flin (j, g)$ and choosing the one with the larger difference. For the experiments in Section~\ref{sec:experiments}, we use logistic regression as the hypothesis class for $g$ because it is non-linear enough to capture complex relationships in the data, but maintains interpretability in the form of its coefficients, and importantly because Equation~\ref{eq:flinear} is then differentiable in the parameters $\theta$ of $g(X) = \sigma(X \cdot \theta), \sigma(x) = \frac{1}{1+e^{-x}}$. Since Equation~\ref{eq:flinear} is differentiable in $\theta$, we can use non-convex optimizers like SGD or ADAM to maximize Equation~\ref{eq:flinear} over $\theta$.

While this is an appealing notion due to its simplicity, it is not relevant unless the matrix $Xg(X)X^{T}$ is of full rank. We ensure this first by lower bounding the size of $g$ via a size penalty term $P_{size} = \max(\alpha_L-|g(X_{train})|,0) + \max(|g(X_{train})|-\alpha_U,0)$, which allows us to provide $\alpha$ constraints in the same manner as in the separable approach. We also add a small $l_2$ regularization term $\epsilon I$ to $X^{T}g(X)X$. This forces the matrix to be invertible, avoiding issues with extremely small subgroups. Incorporating these regularization terms, Equation~\ref{eq:flinear} becomes:

\begin{equation}
\label{eq:full_lin}
    \flin (j, g) = \lambda_s \cdot \big((X \sigma(X \cdot \theta_L^T) X^T + \epsilon I)^{-1}(X^T\sigma(X \cdot \theta_L^T)Y) \cdot e_j\big) + \lambda_c \cdot P_{size}
\end{equation}

We note that $\linfid$ is a similar notion to that of LIME \cite{lime}, but LIME estimates a local effect around each point which is then summed to get the effect in the subgroup, and so it is \emph{separable}. It is also the case that $\flin$ is non-convex as shown below:

\begin{lemma}
$\flin$ as defined in Equation~\ref{eq:flinear} is non-convex.
\end{lemma}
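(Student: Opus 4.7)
My plan is to prove non-convexity by producing an explicit minimal-dimensional counterexample. Since $\flin(j, g)$ is the $j$-th coordinate of a weighted-least-squares coefficient vector and is therefore a rational function of the diagonal entries of $g(X)$, one expects non-convexity to manifest already in dimension one. I would therefore take $d = 1$ and $n = 2$, and treat the (relaxed) subgroup indicator as weights $(w_1, w_2) \in (0, \infty)^2$, which is the natural continuous relaxation permitted by the definition $g : \mathcal{X}_{sens} \to [0,1]$ from the preliminaries.

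With $x_1 = x_2 = 1$ and $y_1 = -y_2 = 1$, Equation~\ref{eq:flinear} collapses to the scalar
$$\flin(w_1, w_2) \;=\; \frac{w_1 x_1 y_1 + w_2 x_2 y_2}{w_1 x_1^2 + w_2 x_2^2} \;=\; \frac{w_1 - w_2}{w_1 + w_2}.$$
Restricting to the line $w_2 = 1,\ w_1 = t > 0$ gives $\phi(t) = (t-1)/(t+1)$, whose second derivative $\phi''(t) = -4/(t+1)^3$ is strictly negative on the domain. Hence $\phi$ is strictly concave on $(0,\infty)$; and since any convex function must remain convex when restricted to a line in its domain, $\flin$ cannot be convex. (For free: running the analogous calculation along $w_1 = 1,\ w_2 = t$ yields strict convexity in $t$, so in fact the Hessian of $\flin$ is indefinite and $\flin$ is neither convex nor concave. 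This is stronger than needed but worth recording.)

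For completeness I would add two short remarks. First, the same example applied to the regularized variant in Equation~\ref{eq:full_lin} still has strictly negative second derivative along the chosen line for all sufficiently small $\epsilon > 0$, by continuity of $\phi''$ in the perturbation, so the non-convexity claim survives the $\epsilon I$ regularization used in practice. Second, when $g$ is produced through the logistic parameterization $g(X) = \sigma(X\theta)$, non-convexity of $\flin$ in the weights implies non-convexity in $\theta$ along suitable paths as well, since $\sigma$ is a diffeomorphism onto $(0,1)$ and can only compound, not cure, non-convexity.

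The only real obstacle is cosmetic: picking data that simultaneously (i) keeps the Gram-like matrix $X g(X) X^T$ invertible on the segment of weights used, (ii) produces a clean closed-form ratio with unambiguous sign of the second derivative, and (iii) lies within the natural relaxed domain of $g$. The choice $x_1 = x_2 = 1,\ y_1 = -y_2 = 1$ handles all three at once and reduces the proof to a single second-derivative calculation, so after the setup there is nothing delicate left to do.
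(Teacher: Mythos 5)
Your proof is correct, and it takes a genuinely different route from the paper's. The paper argues by contradiction on the sign of a Hessian: it normalizes $(Xg(X)X^T)^{-1}$ to the identity ``by scaling $g$,'' reduces $\flin$ to the linear form $(X^Tg(X)Y)\cdot e_j$, and then exhibits a $2\times 2$ design for which the resulting Hessian (taken through the parameterization of $g$) changes definiteness. You instead keep the full ratio form of the WLS coefficient, pick the explicit one-dimensional instance $x_1=x_2=1$, $y_1=-y_2=1$ so that $\flin = (w_1-w_2)/(w_1+w_2)$, and restrict to a line to get a strictly concave univariate function --- the textbook way to disprove convexity. Your version buys rigor: the paper's ``fix the inverse Gram to the identity WLOG'' step only holds pointwise, not over the whole domain, and its $g''(X)$ notation leaves ambiguous what variable the Hessian is taken in; your line-restriction argument has no such gaps, stays inside the natural domain $[0,1]^2$ of the weights (where $\phi''(t)=-4/(t+1)^3<0$ still holds), and additionally yields the stronger conclusion that $\flin$ is neither convex nor concave. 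The one loose thread is your closing remark that non-convexity in the weights transfers to non-convexity in $\theta$ under $g=\sigma(X\theta)$ ``since $\sigma$ is a diffeomorphism'': composition with a diffeomorphism does not in general preserve non-convexity, so that claim would need its own short computation --- but it is an aside, not part of what the lemma asserts, and the main argument stands without it.
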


\begin{proof}
We will prove this by contradiction. Assume $\flin$ is convex, which means the Hessian is positive semi-definite everywhere. First we will fix $(X g(X) X^T))^{-1}$ to be the identity matrix, which we can do without loss of generality by scaling $g$ by a constant. This scaling will not affect the convexity of $\flin$.

Now, we have the simpler form of $\flin = (X^T g(X) Y) \cdot e_j$. We then can compute the values of the Hessian:
\begin{eqnarray*}
  \frac{\partial F^2}{\partial^2 g} &=& (X^T g''(X) Y) \cdot e_j
\end{eqnarray*}

Consider the case where $X^T$ is a $2 \times 2$ matrix with rows $1, 0$ and $0, -1$ and $Y$ is a vector of ones. If $g$ weights the second column (i.e. feature) greater than the first, then the output Hessian will be positive semi-definite. But if $g$ weights the first column greater than the first, then it will be negative semi-definite. Since the Hessian is not positive semi-definite everywhere, $\flin$ must be non-convex over the space of $g$.
\end{proof}

This means the stationary point we converge to via gradient descent may only be locally optimal. In Section~\ref{sec:experiments}, we optimize Equation~\ref{eq:full_lin} using the ADAM optimizer \cite{adam}. Additional details about implementation and parameter selection are in Appendix~\ref{sec:details_app}. Despite only locally optimal guarantees, we were still able to find (feature, subgroup) pairs with high $\linfid$ for all datasets.

\section{Experimental Details}
\label{sec:details_app}

\subsection{Algorithmic Details}
\label{subsec:alg_details}
\textbf{Separable Case}.  In order to implement Algorithm~\ref{alg:cap} over a range of $[\alpha_L, \alpha_U]$ values, we need to specify our dual norm  $B$, learning rate $\eta$, number of iterations used $T$, rich subgroup class $\mathcal{G}$, and the associated oracle $\cscg$. We note that for each feature $f_j$, Algorithm~\ref{alg:cap} is run twice; one corresponding to maximizing $\fid(f_j, g, h)$ and the other minimizing it. Note that in both cases our problem is a minimization, but when maximizing we simply negate all of the point wise feature importance values $F(f_j, x_i, h) \to -F(f_j, x_i, h)$. In all experiments our subgroup class $\mathcal{G}$ consists of linear threshold functions over the sensitive features: 
$\mathcal{G} = \{\theta \in \mathbb{R}^{d_{sens}}: \theta((x,x')) = \textbf{1}\{\theta'x > 0\}$. We implement $\cscg$ as in \cite{msr, gerry} via linear regression, see Algorithm~\ref{alg:csc} in Appendix~\ref{sec:cscg_app}.
To ensure the dual player's response is strong enough to enforce desired size constraints, we empirically found that setting the hyperparameter $B=10^4 \cdot \mu(f_j)$ worked well on all datasets, where $\mu(f_j)$ is the average absolute importance value for feature $j$ over $X$. We set the learning rate for exponentiated gradient descent to $\eta=10^{-5}$. Empirical testing showed that $\eta \cdot B$ should be on the order of $\mu(f_j)$ or smaller to ensure proper convergence. We found that setting the error tolerance hyperparameter $\nu=.05 \cdot \mu(f_j) \cdot n \cdot \alpha_L$ worked well in ensuring good results with decent convergence time across all datasets and values of $\alpha$. For all datasets and methods we ran for at most $T = 5000$ iterations, which we observe empirically was large enough for $\fid$ values to stabilize and for $\frac{1}{T}\sum_{t = 1}^{T}|g_t| \in [\alpha_L, \alpha_U]$, with the method typically converging in $T = 3000$ iterations or less. See Appendix~\ref{sec:opt_app} for a sample of convergence plots.

\textbf{Non-Separable Case}. For the non-separable approach, datasets were once again split into train and test sets. For Student, it was split 50-50, while COMPAS, Bank, and Folktables were split 80-20 train/test. The 50-50 split for Student was chosen so that a linear regression model would be properly fit on a small $g(X_{test})$. The parameter vector $\theta$ for a logistic regression classifier was randomly initialized with a PyTorch random seed of $0$ for reproducability. We used an ADAM \cite{adam} optimizer with a learning rate of $.05$ as our heuristic solver for the loss function.

To enforce subgroup size constraints, $\lambda_{s}P_{size}$ must be on a significantly larger order than $\lambda_{c} \flin (j,g)$. Empirical testing found that values of $\lambda_{s} = 10^5$ and $\lambda_{c}=10^{-1}$ returned appropriate subgroup sizes and also ensured smooth convergence. The optimizer ran until it converged upon a minimized linear regression coefficient, subject to the size constraints. Experimentally, this took at most $1000$ iterations, see Appendix~\ref{sec:conv_app} for a sample of convergence plots. After solving twice for the minimum and maximum $\flin (j,g)$ values and our subgroup function $g$ is chosen, we fit the linear regression on both $X_{test}$ and $g(X_{test})$ to get the final \fid.

\subsection{$\fid$ Notions}
\label{subsec:fid_notions}
\textbf{LIME}: A random forest model $h$ was trained on dataset $X^n$. Then each data point along with the corresponding probability outputs from the classifier were input into the \href{https://lime-ml.readthedocs.io/en/latest/lime.html}{LIME Tabular Explainer} Python module. This returned the corresponding LIME explanation values.

\textbf{SHAP}: This was done with the same method as LIME, except using the \href{https://shap.readthedocs.io/en/latest/}{SHAP Explainer} Python module.

\textbf{Vanilla Gradient}: Labeled as \textit{GRAD} in charts, the vanilla gradient importance notion was computed using the Gradient method from the \href{https://open-xai.github.io/}{OpenXAI} library \cite{agarwal2022openxai}. This notion only works on differentiable classifiers so in this case, $h$ is a logistic regression classifier. We found there was no substantial difference between the choice of random forest or logistic regression for $h$ when tested on other importance notions (Appendix~\ref{subsec:valid}). Due to constraints on computation time, this method was only tested on the COMPAS dataset (using \texttt{Two Year Recidivism} as the target variable).

\textbf{Linear Regression}: For the linear regression notion, the subgroup $g$ was chosen to be in the logistic regression hypothesis class. For a given subgroup $g(X)$, the weighted least squares (WLS) solution is found whose linear coefficients $\theta_g$ then define the feature importance value $e_j \cdot \theta_g$.

For details on the consistency of these importance notions, see Appendix~\ref{sec:gen_app}.

\subsection{Datasets}
\label{subsec:datasets}

These four datasets were selected on the basis of three criterion: (i) they all use features which could be considered \emph{sensitive} to make predictions about individuals in a context where bias in a significant concern (ii) they are heavily used datasets in research on interpretability and fairness, and as such issues of bias in the datasets should be of importance to the community, and (iii) they trace out a range of number of datapoints and number of features and sensitive features, which we summarise in Table~\ref{tab:datasets}. For each dataset, we specified features that were "sensitive." That is, when searching for subgroups with high $\fid$, we only considered rich subgroups defined by features generally covered by equal protection or privacy laws (e.g. race, gender, age, health data).

\textbf{Student}: This dataset aims to predict student performance in a Portugese grade school using demographic and familial data. For the purposes of this experiment, the target variable was math grades at the end of the academic year. Student was by far the smallest of the four datasets with 395 data points. The sensitive features in Student are \texttt{gender}, \texttt{parental status}, \texttt{address} (urban or rural), \texttt{daily alcohol consumption}, \texttt{weekly alcohol consumption}, and \texttt{health}. \texttt{Age} typically would be considered sensitive but since in the context of school, age is primarily an indicator of class year, this was not included as a sensitive feature. The categorical features \texttt{address}, \texttt{Mother's Job}, \texttt{Father's Job}, and \texttt{Legal Guardian} were one hot encoded.

\textbf{COMPAS}: This dataset uses a pre-trial defendant's personal background and past criminal record to predict risk of committing new crimes. To improve generalizability, we removed any criminal charge features that appeared fewer than 10 times. Binary counting features (e.g. \texttt{25-45 yrs old} or \texttt{5+ misdemeanors}) were dropped in favor of using the continuous feature equivalents. Additionally, the categorical variable \texttt{Race} was one-hot encoded. This brought the total number of features to 95. The sensitive features in COMPAS are \texttt{age}, \texttt{gender}, and \texttt{race} (Caucasian, African-American, Asian, Hispanic, Native American, and Other). For COMPAS, we ran all methodologies twice, once using the binary variable, \texttt{Two Year Recidivism}, as the target variable and once using the continuous variable \texttt{Decile Score}. \texttt{Two Year Recidivism} is what the model is intended to predict and is labeled as \textit{COMPAS R} in the results. Meanwhile, \texttt{Decile Score} is what the COMPAS system uses in practice to make recommendations to judges and is labeled as \textit{COMPAS D} in the results.

\textbf{Bank}: This dataset looks at whether a potential client signed up for a bank account after being contacted by marketing personnel. The sensitive features in Bank are \texttt{age} and \texttt{marital status} (married, single, or divorced). The \texttt{age} feature in Bank is a binary variable representing whether the individual is above the age of 25. 

\textbf{Folktables}: This dataset is derived from US Census Data. Folktables covers a variety of tasks, but we used the ACSIncome task, which predicts whether an individual makes more than \$50k per year. The ACSIncome task is meant to mirror the popular Adult dataset, but with modifications to address sampling issues. For this paper, we used data from the state of Michigan in 2018. To reduce sparseness of the dataset, the \texttt{Place of Birth} feature was dropped and the \texttt{Occupation} features were consolidated into categories of work as specified in the official Census dictionary \cite{pums}, (e.g. people who work for the US Army, Air Force, Navy, etc. were all consolidated into \texttt{Occupation=Military}). The sensitive features in Folktables are \texttt{age}, \texttt{sex}, \texttt{marital status} (married, widowed, divorced, separated, never married/under 15 yrs old), and \texttt{race} (Caucasian, African-American, Asian, Native Hawaiian, Native American singular tribe, Native American general, Other, and 2+ races).

\begin{table}[h]
\centering
  \caption{Summary of Datasets}
  \label{tab:datasets}
  \begin{tabular}{ccccl}
    \toprule
    Dataset & Data Points & \# of Features & \# of Sensitive Features \\
    \midrule
    Student & 395 & 32 & 6 \\
    COMPAS & 6172 & 95 & 8 \\
    Bank & 30488 & 57 & 4 \\
    Folktables Income & 50008 & 52 & 16 \\
    \bottomrule
  \end{tabular}
\end{table}

\section{Synthetic Experiment}
\label{sec:synthetic_experiment}
In addition to the empirical experiments on real-world datasets, we generated two synthetic datasets and used them to validate our methods in a controlled environment. In our baseline experiment, we created a dataset where the outcome $y$ is independent of the sensitive features to confirm that our algorithm does not result in any false discovery. Next, we modified the distribution of the outcome for a subset of individuals, injecting a large $\fid$ in the subgroup $g$ for feature $f_j$. We then confirmed that our algorithm is able to find that feature importance disparity. We discuss the dataset generation, experimental setup, and results from those two experiments here.

\subsection{Baseline Case}
\textbf{Experimental Setup:} We generated a synthetic dataset of size $n=4000$. Each person in the dataset had randomly generated sensitive features: \texttt{age}, \texttt{sex}, and \texttt{race}. \texttt{sex} and \texttt{race} were drawn based on US Census data and $\texttt{age} \sim \mathcal{N}(50,7)$. Three more variables were generated for each individual: a binary variable $x_1$ and normally distributed variables $x_2 \sim \mathcal{N}(100,5)$ and $x_3 \sim \mathcal{N}(100,5)$. These three additional variables were drawn independently of the sensitive features and each other. We then generated outcome $y \sim x_1 + x_2 + x_3 + \eta, \eta \sim \mathcal{N}(0,1)$; note that $y$ is generated from the same model for any sensitive group, so there should be no subgroups with large $\avgfid$. We then trained a random forest model on this dataset and computed feature importance values using SHAP.

\textbf{Results:} We summarize the ($f_j$, $g$) pairs with the largest $\avgfid$ in Table~\ref{tab:synth_summary}. As expected, we see that Algorithm~\ref{alg:cap} does not find any significant $\avgfid$ for this baseline case. $\avgfid$ is not exactly zero, which is expected because $\avgfid$ is measured as an absolute value, meaning that any difference in feature importance due to random variation will result in a non-zero value. 

\subsection{Injected Case}
\textbf{Experimental Setup:} In our second experiment, we started with the same individuals generated in the baseline case but we injected FID for a subgroup $g$ of older, hispanic individuals consisting of approximately $13\%$ of the population. For this second dataset, if an individual was in $g$, then we generated $y \sim x_1 + 50*x_2 + x_3 + \eta, \eta \sim \mathcal{N}(0,1)$. Otherwise, $y$ was generated as in the baseline case. Also as in the baseline scenario, we used a random forest model with SHAP as the feature importance notion.

\textbf{Results:} As seen in Table~\ref{tab:synth_summary}, the three features with the highest FID subgroups found were \texttt{Hispanic}, \texttt{age}, and $x_2$. Finding $x_2$ was expected, but it is not unusual for \texttt{Hispanic} and \texttt{age} to also be found since in our synthetic example, $y$ is dependent on these two features for points in $g$ and completely independent for $X / g$. The fourth largest $\avgfid$ found was significantly smaller than the top three and is comparable in magnitude to that of the baseline case. The subgroup found by Algorithm~\ref{alg:cap} for the top feature captured $64\%$ of the older Hispanic subgroup where the disparity was injected. This is not a perfect result, but was obtained without extensive tuning and illustrates our method can detect an injected disparity in a controlled environment. 

\begin{table*}[t]
\centering
  \caption{Summary of the top $(f_j,g)$ pairs found for the two synthetic dataset experiments. $\mu(F)$ is the average feature importance value on the specified group. We can see that in the baseline experiment, there was very little $\avgfid$. In the injected case, Algorithm~\ref{alg:cap} found very large $\avgfid$ subgroups on the three features which were effected by the injection. The next largest pair in the injected case had $\avgfid$ comparable to the baseline case.}
  \label{tab:synth_summary}
  \scalebox{.8}{
  \begin{tabular}{llcccc}
    \toprule
    Experiment & Feature $f_j$ & $|g|$ & $\mu(F(f_j, X))$ & $\mu(F(f_j, g))$ & $\avgfid$ \\
    \midrule
    Baseline & $x_3$ & $.13$ & $-.61$ & $-.48$ & $\mathbf{.13}$\\
             & $x_2$ & $.11$ & $-1.16$ & $-1.23$ & $\mathbf{.06}$\\
             & $x_1$ & $.12$ & $-.05$ & $-.01$ & $\mathbf{.04}$\\
    \midrule
    Injected & Hispanic-American & $.15$ & $.86$ & $28.0$ & $\mathbf{27.13}$\\
             & Age & $.17$ & $3.57$ & $-7.07$ & $\mathbf{10.6}$\\
             & $x_2$ & $.15$ & $-1.56$ & $.22$ & $\mathbf{1.78}$\\
             & Black-American & $.15$ & $.01$ & $.04$ & $\mathbf{.03}$\\
  \bottomrule
\end{tabular}}
\end{table*}

\section{Comparison of $\fid$ Values on Rich vs. Marginal Subgroups}
\label{sec:rich_marg}

This appendix provides expanded information from Section~\ref{sec:rich_marg_main}. Here we are justifying the use of rich subgroups by searching for maximal $\avgfid$ subgroups in the marginal subgroup space. Marginal subgroups are those defined by a single sensitive characteristic making them straightforward to search. In Figure~\ref{fig:rich_marg}, we compare the maximal $\avgfid$ rich subgroups shown in Figure~\ref{fig:all_de} to the maximal $\avgfid$ marginal subgroup for the same feature. In about half the cases, expanding our subgroup classes to include rich subgroups defined by linear functions of the sensitive attributes enabled us to find a subgroup that had a higher $\avgfid$. In the other cases, the $\avgfid$ of the marginal subgroup was similar to the rich subgroup. Sometimes, the marginal subgroup outperformed the rich subgroup; this happens when using rich subgroups does not offer any substantial advantage over marginal subgroups, and the empirical error tolerance in Algorithm~\ref{alg:cap} stopped the convergence early.


\begin{figure}
    \centering
    \begin{subfigure}{.25\textwidth}
        \centering
        \includegraphics[width=\linewidth]{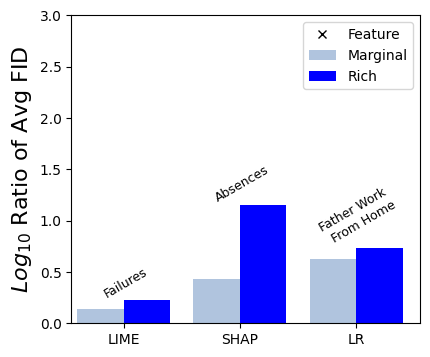}
        \caption{Student}
        \label{fig:rich_marg_sub1}
    \end{subfigure}
    \begin{subfigure}{.315\textwidth}
        \centering
        \includegraphics[width=\linewidth]{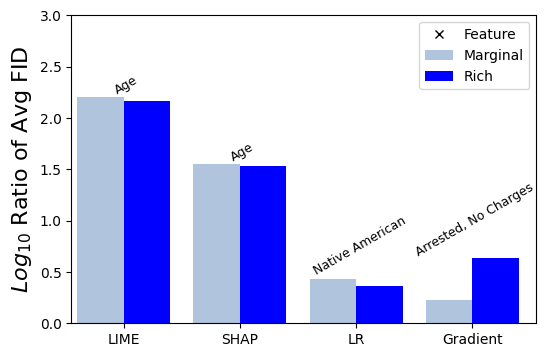}
        \caption{COMPAS R}
        \label{fig:rich_marg_sub2}
    \end{subfigure}
    \vskip .2in
    \begin{subfigure}{.25\textwidth}
        \centering
        \includegraphics[width=\linewidth]{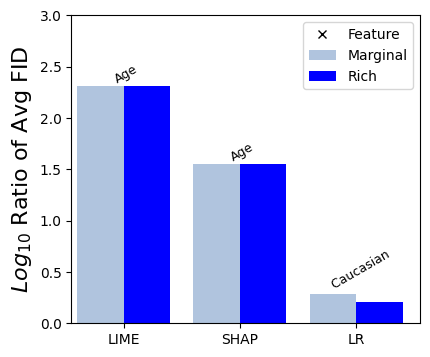}
        \caption{COMPAS D}
        \label{fig:rich_marg_sub3}
    \end{subfigure}
    \begin{subfigure}{.25\textwidth}
        \centering
        \includegraphics[width=\linewidth]{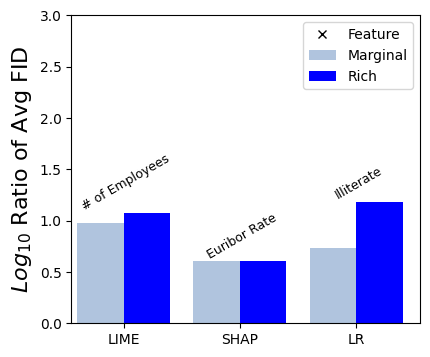}
        \caption{Bank}
        \label{fig:rich_marg_sub4}
    \end{subfigure}
    \begin{subfigure}{.25\textwidth}
        \centering
        \includegraphics[width=\linewidth]{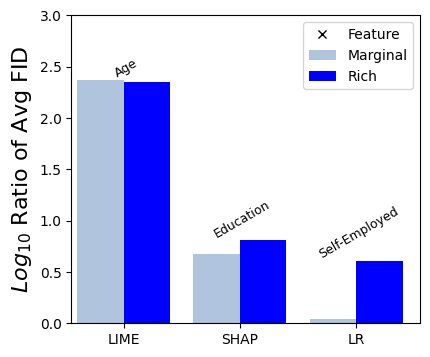}
        \caption{Folktables}
        \label{fig:rich_marg_sub5}
    \end{subfigure}
\caption{Comparison of the maximal $\fid$ rich subgroups from Figure~\ref{fig:all_de} to the maximal $\fid$ marginal subgroup on the same feature. This is displayed as $\lvert log_{10}(R) \rvert$ where $R$ is the ratio of average importance per data point for separable notions and the ratio of coefficients for the linear coefficient notion. The feature associated with the subgroups is written above each bar.}
\label{fig:rich_marg}
\end{figure}


\newpage

\section{Statistical Validity of Results: Generalization of $\fid$ and $|g|$}
\label{subsec:valid}

When confirming the validity of our findings, there are two potential concerns: (1) Are the subgroup sizes found in-sample approximately the same on the test set and (2) do the $\fid$'s found on the training set generalize out of sample? Taken together, (1) and (2) are sufficient to guarantee our maximal $\avgfid$ values generalize out of sample.

In Figure~\ref{fig:size}, we can see that when we take the maximal subgroup found for each feature $f_j$, $g_j^{*}$, and compute it's size $|g_j^{*}|$ on the test set, for both the separable and non-separable methods it almost always fell within the specified $[\alpha_L, \alpha_U]$ range; the average difference in $|g_j^{*}(X_{train})|$ and $|g_j^{*}(X_{test})|$ was less than $.005$ on all notions of feature importance and all datasets except for Student, which was closer to $.025$ due to its smaller size. A few rare subgroups were significantly outside the desired $\alpha$ range, which was typically due to the degenerate case of the feature importance values all being $0$ for the feature in question. Additional plots for all (dataset, notion) pairs are in Figures~\ref{fig:size_app}, \ref{fig:size_gen_app}. 

\begin{figure}
    \centering
    \begin{subfigure}{.35\textwidth}
        \centering
        \includegraphics[width=.85\linewidth]{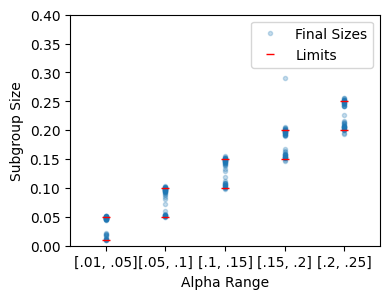}
        \caption{}
        \label{fig:size_sub1}
    \end{subfigure}
    \begin{subfigure}{.35\textwidth}
        \centering
        \includegraphics[width=.85\linewidth]{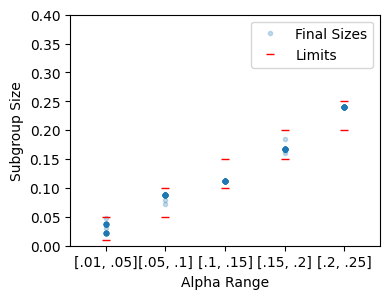}
        \caption{}
        \label{fig:size_sub2}
    \end{subfigure}
    \begin{subfigure}{.35\textwidth}
        \centering
        \includegraphics[width=.85\linewidth]{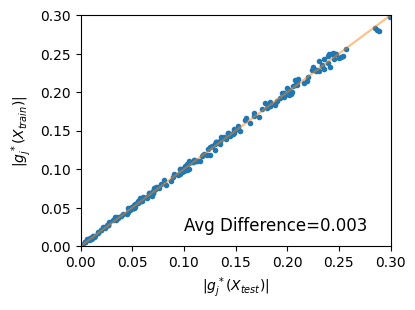}
        \caption{}
        \label{fig:size_sub3}
    \end{subfigure}
    \begin{subfigure}{.35\textwidth}
        \centering
        \includegraphics[width=.85\linewidth]{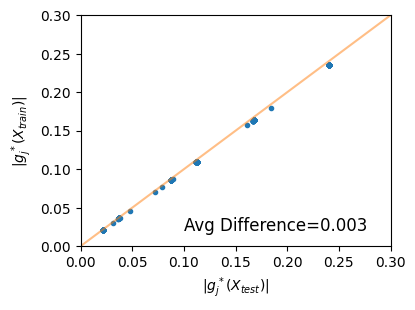}
        \caption{}
        \label{fig:size_sub4}
    \end{subfigure}
\caption{Generalizability of $|g|$ on the Folktables dataset. (a) Size outputs from Algorithm~\ref{alg:cap} for all features and separable notions and (b) from optimizing Equation~\ref{eq:full_lin} for $\linfid$ show that our size constraints hold in-sample. (c) Plots the corresponding values of $|g_j^{*}(X_{train})|$ vs $|g_j^{*}(X_{test})|$ for separable notions and (d) for $\linfid$, showing that the subgroup size generalizes out of sample.}
\label{fig:size}
\end{figure}

\begin{figure}
    \centering
    \begin{subfigure}{.35\textwidth}
        \centering
        \includegraphics[width=.85\linewidth]{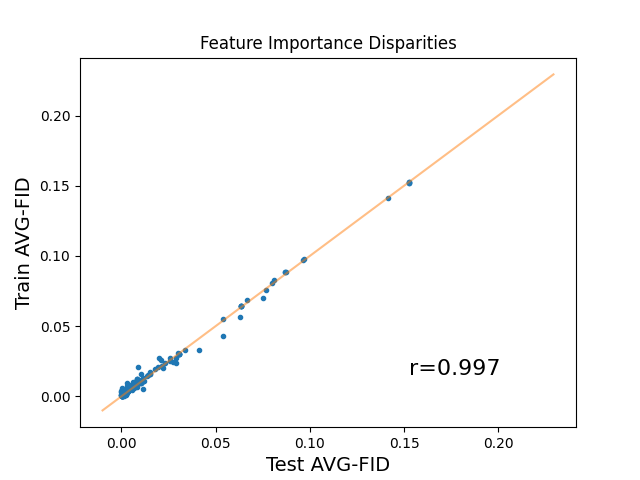}
        \caption{LIME}
        \label{fig:fid_gen_sub1}
    \end{subfigure}
    \begin{subfigure}{.35\textwidth}
        \centering
        \includegraphics[width=.85\linewidth]{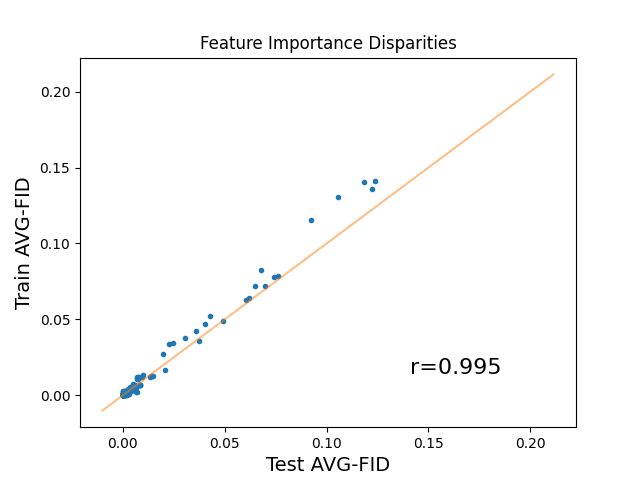}
        \caption{SHAP}
        \label{fig:fid_gen_sub2}
    \end{subfigure}
    \begin{subfigure}{.35\textwidth}
        \centering
        \includegraphics[width=.85\linewidth]{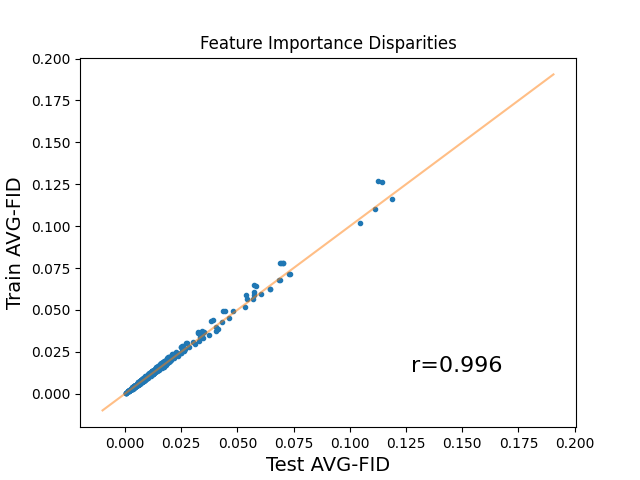}
        \caption{GRAD}
        \label{fig:fid_gen_sub3}
    \end{subfigure}
    \begin{subfigure}{.35\textwidth}
        \centering
        \includegraphics[width=.85\linewidth]{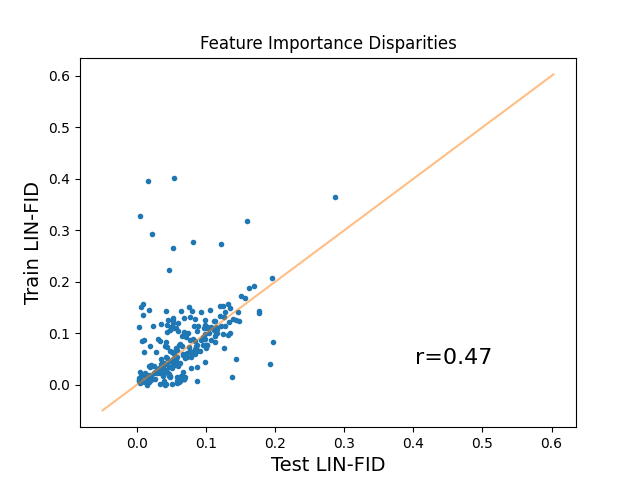}
        \caption{Linear Regression}
        \label{fig:fid_gen_sub4}
    \end{subfigure}
\caption{Out of sample generalization of the methods. Each dot represents a feature, plotting $\fid$ on $X_{test}$ vs on $X_{train}$. All are computed on the Folktables dataset except (c) is computed on COMPAS R. The diagonal line represents perfect generalization and the Pearson correlation coefficient is displayed in figure. The non-separable approach suffers from the instability of the WLS method.}
\label{fig:fid_gen}
\end{figure}

In Figure~\ref{fig:fid_gen}, we compare $\avgfid(f_j, g_j^{*}, X_{train})$ to $\avgfid(f_j, g_j^{*}, X_{test})$, or $\linfid$ in the case of the linear regression notion, to see how $\fid$ generalizes. The separable notions all generalized very well, producing very similar $\avgfid$ values for in and out of sample tests. The non-separable method still generalized, although not nearly as robustly, with outlier values occurring. This was due to ill-conditioned design matrices for small subgroups leading to instability in fitting the least squares estimator. In Appendix~\ref{sec:gen_app}, we investigate the robustness of the feature importance notions, evaluated on the entire dataset. We find that the coefficients of linear regression are not as stable, indicating the lack of generalization in Figure~\ref{fig:fid_gen} could be due to the feature importance notion itself lacking robustness, rather than an over-fit selection of $g_j^{*}$.

\begin{figure}[H]
\centering
\begin{subfigure}{.3\textwidth}
  \centering
  \includegraphics[width=\linewidth]{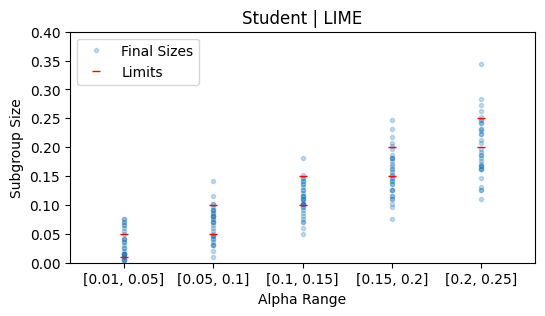}
\end{subfigure}%
\begin{subfigure}{.3\textwidth}
  \centering
  \includegraphics[width=\linewidth]{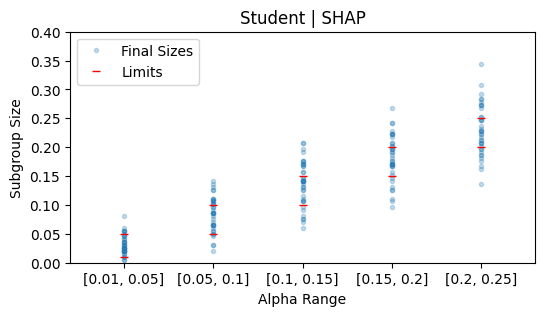}
\end{subfigure}
\begin{subfigure}{.3\textwidth}
  \centering
  \includegraphics[width=\linewidth]{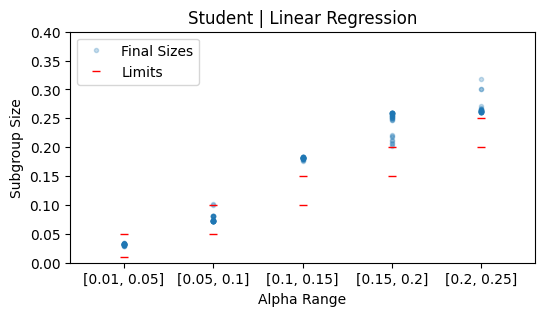}
\end{subfigure} \\

\begin{subfigure}{.3\textwidth}
  \centering
  \includegraphics[width=\linewidth]{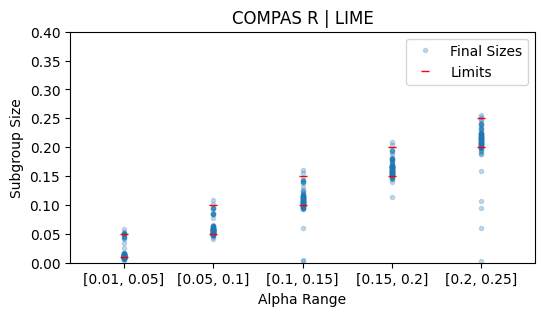}
\end{subfigure}
\begin{subfigure}{.3\textwidth}
  \centering
  \includegraphics[width=\linewidth]{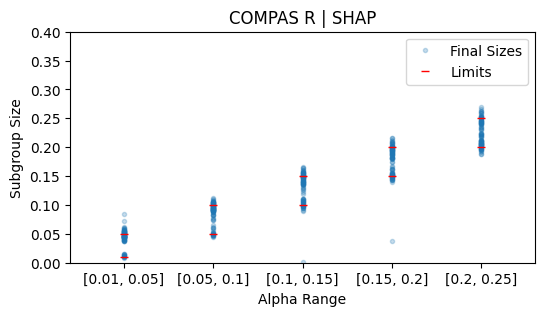}
\end{subfigure}%
\begin{subfigure}{.3\textwidth}
  \centering
  \includegraphics[width=\linewidth]{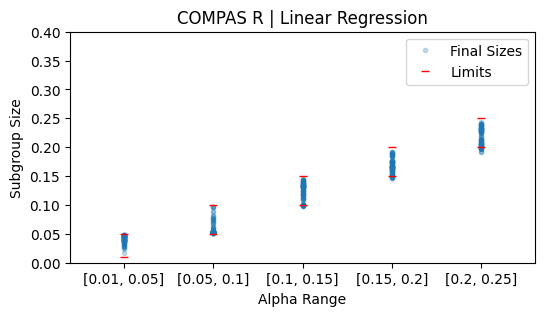}
\end{subfigure} \\

\begin{subfigure}{.3\textwidth}
  \centering
  \includegraphics[width=\linewidth]{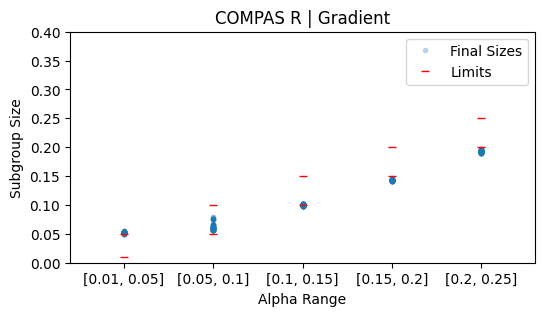}
\end{subfigure}
\begin{subfigure}{.3\textwidth}
  \centering
  \includegraphics[width=\linewidth]{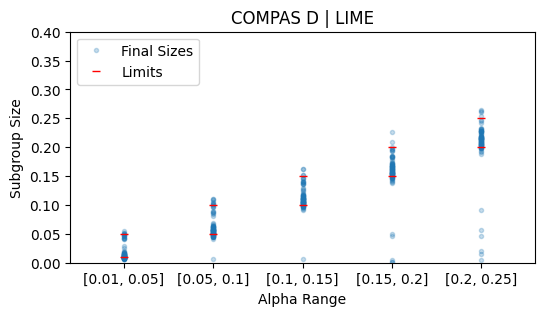}
\end{subfigure}
\begin{subfigure}{.3\textwidth}
  \centering
  \includegraphics[width=\linewidth]{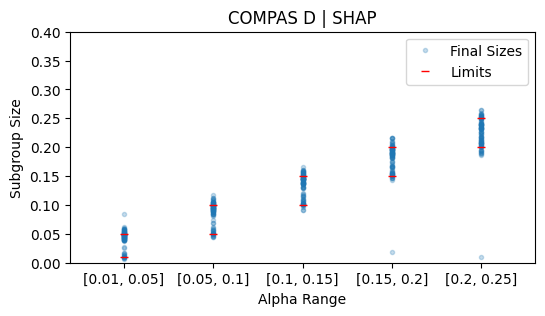}
\end{subfigure} \\

\begin{subfigure}{.3\textwidth}
  \centering
  \includegraphics[width=\linewidth]{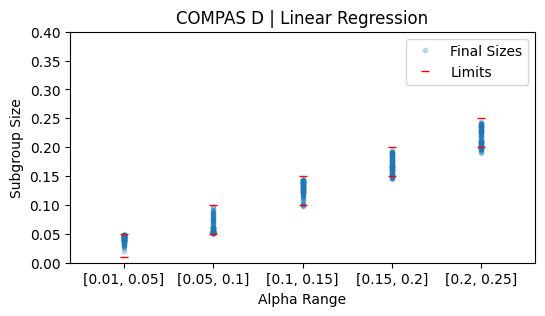}
\end{subfigure}
\begin{subfigure}{.3\textwidth}
  \centering
  \includegraphics[width=\linewidth]{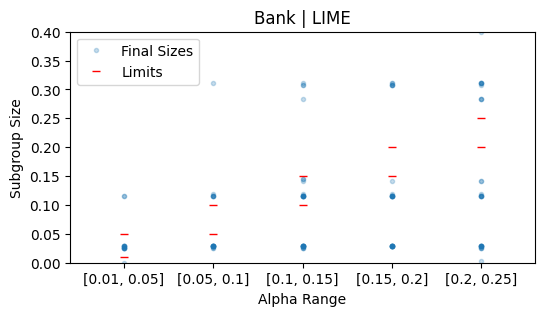}
\end{subfigure}
\begin{subfigure}{.3\textwidth}
  \centering
  \includegraphics[width=\linewidth]{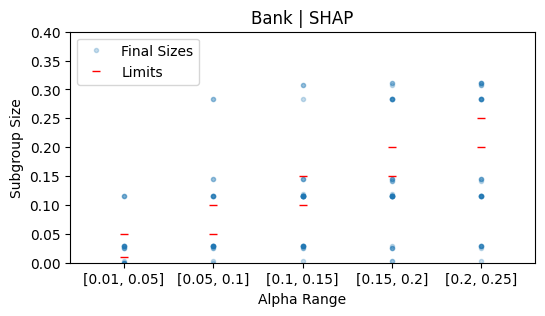}
\end{subfigure} \\

\begin{subfigure}{.3\textwidth}
  \centering
  \includegraphics[width=\linewidth]{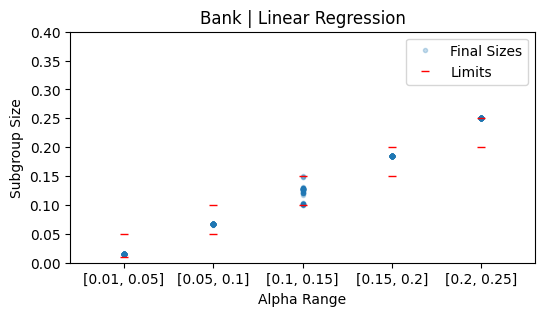}
\end{subfigure}%
\begin{subfigure}{.3\textwidth}
  \centering
  \includegraphics[width=\linewidth]{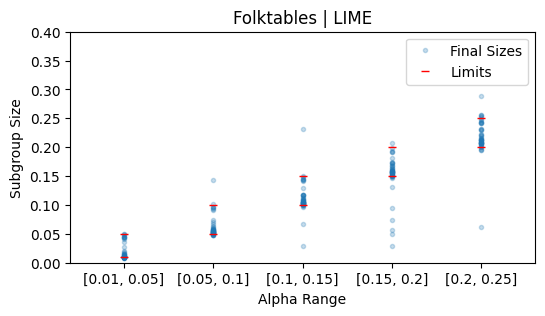}
\end{subfigure}
\begin{subfigure}{.3\textwidth}
  \centering
  \includegraphics[width=\linewidth]{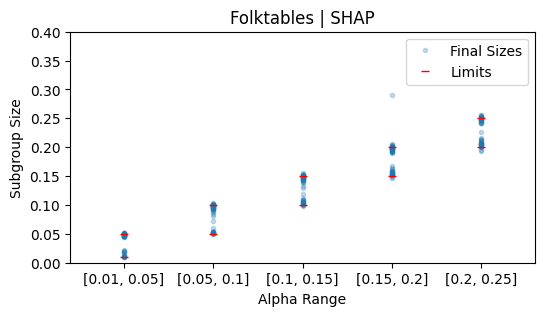}
\end{subfigure} \\

\begin{subfigure}{.3\textwidth}
  \centering
  \includegraphics[width=\linewidth]{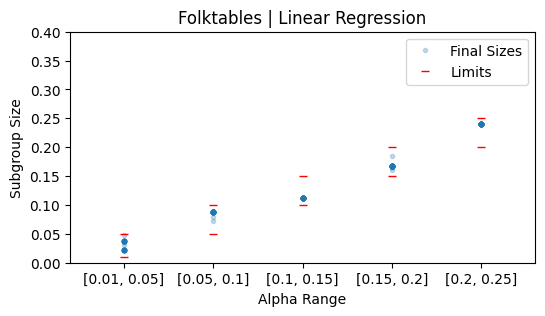}
\end{subfigure}
\caption{Final subgroup sizes of $g(X_{test})$ compared with $\alpha$ range. These almost always fall within the correct size range. Student has the largest errors given the small dataset size. Some subgroups fell significantly outside the expected ranged, mostly due to many importance values, $F(f_j, X, h)$, being zero for a given feature.}
\label{fig:size_app}
\end{figure}

\begin{figure}[H]
\centering
\begin{subfigure}{.245\textwidth}
  \centering
  \includegraphics[width=\linewidth]{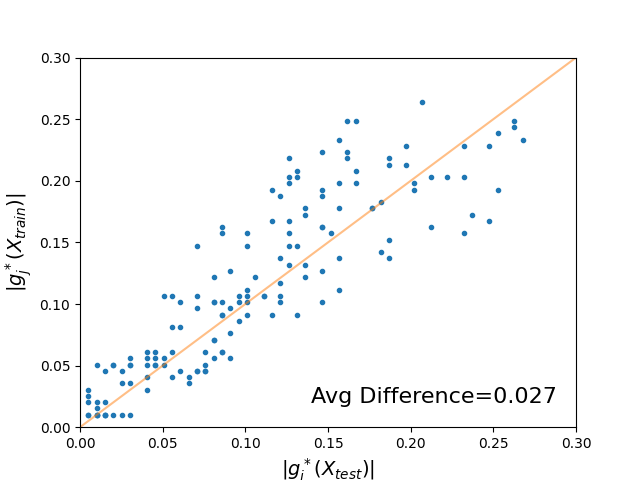}
  \caption{Student, LIME}
\end{subfigure}%
\begin{subfigure}{.245\textwidth}
  \centering
  \includegraphics[width=\linewidth]{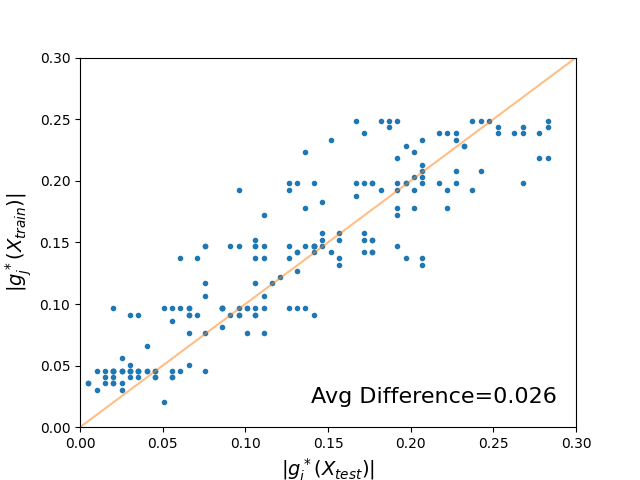}
  \caption{Student, SHAP}
\end{subfigure}
\begin{subfigure}{.245\textwidth}
  \centering
  \includegraphics[width=\linewidth]{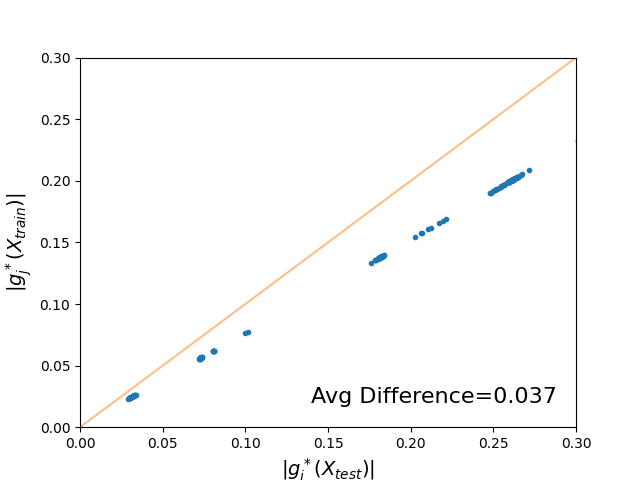}
  \caption{Student, LR}
\end{subfigure}
\begin{subfigure}{.245\textwidth}
  \centering
  \includegraphics[width=\linewidth]{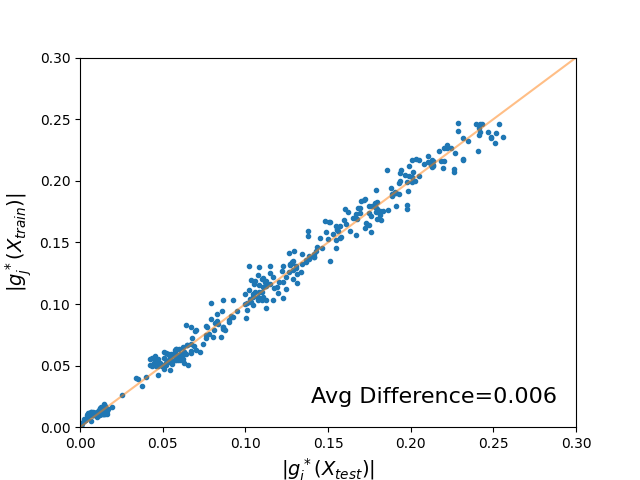}
  \caption{COMPAS R, LIME}
\end{subfigure} \\

\begin{subfigure}{.245\textwidth}
  \centering
  \includegraphics[width=\linewidth]{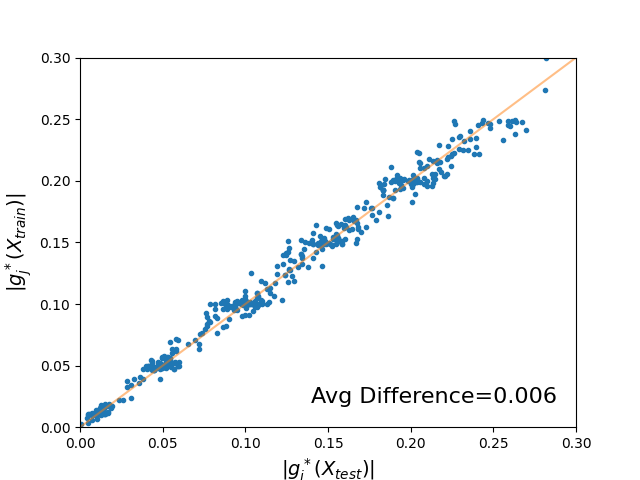}
  \caption{COMPAS R, SHAP}
\end{subfigure}%
\begin{subfigure}{.245\textwidth}
  \centering
  \includegraphics[width=\linewidth]{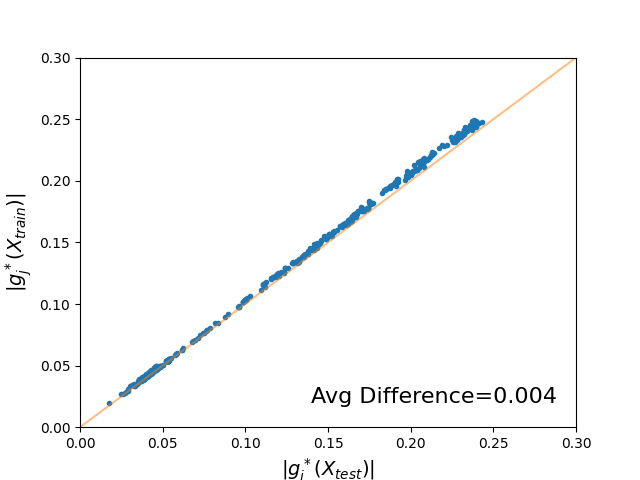}
  \caption{COMPAS R, LR}
\end{subfigure}
\begin{subfigure}{.245\textwidth}
  \centering
  \includegraphics[width=\linewidth]{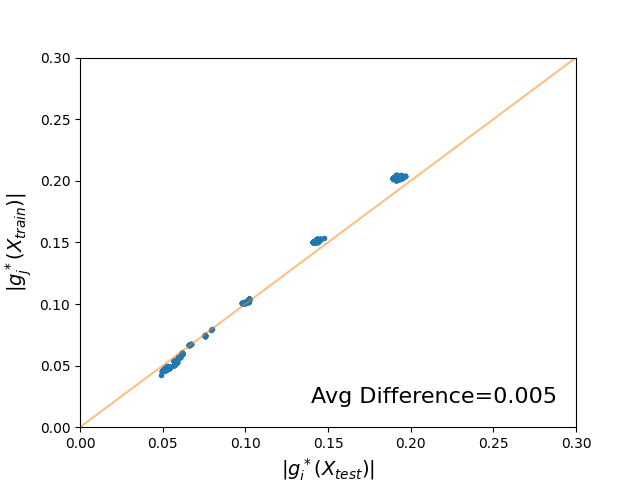}
  \caption{COMPAS R, GRAD}
\end{subfigure}
\begin{subfigure}{.245\textwidth}
  \centering
  \includegraphics[width=\linewidth]{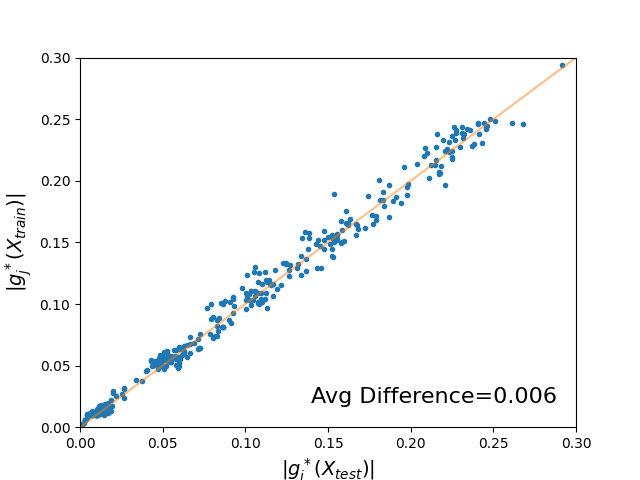}
  \caption{COMPAS D, LIME}
\end{subfigure} \\

\begin{subfigure}{.245\textwidth}
  \centering
  \includegraphics[width=\linewidth]{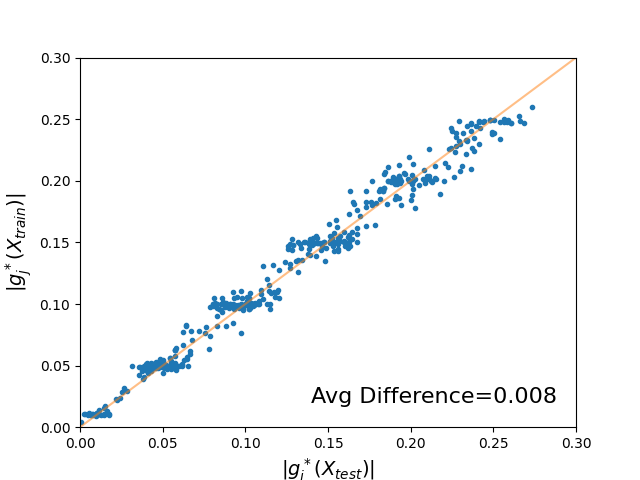}
  \caption{COMPAS D, SHAP}
\end{subfigure}%
\begin{subfigure}{.245\textwidth}
  \centering
  \includegraphics[width=\linewidth]{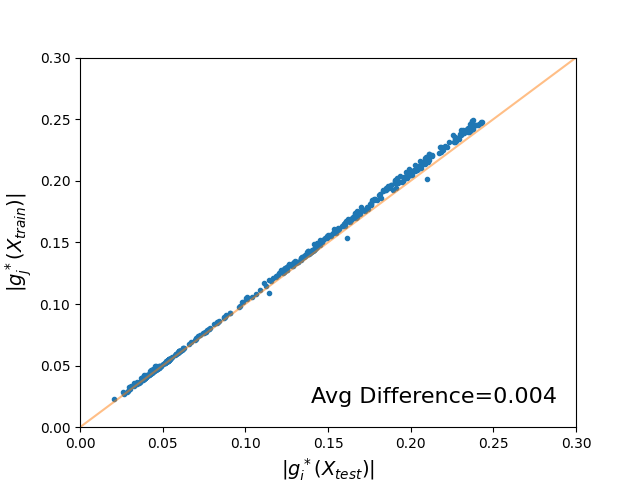}
  \caption{COMPAS D, LR}
\end{subfigure}
\begin{subfigure}{.245\textwidth}
  \centering
  \includegraphics[width=\linewidth]{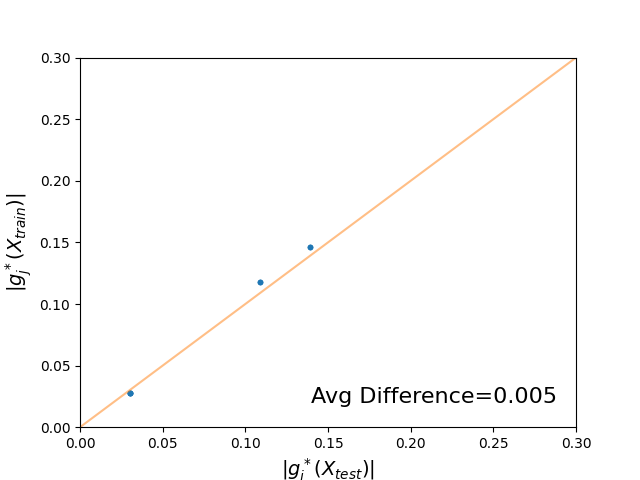}
  \caption{Bank, LIME}
\end{subfigure}
\begin{subfigure}{.245\textwidth}
  \centering
  \includegraphics[width=\linewidth]{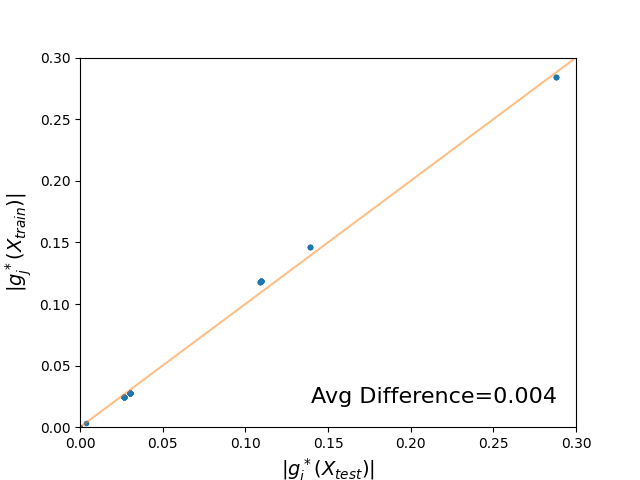}
  \caption{Bank, SHAP}
\end{subfigure} \\

\begin{subfigure}{.245\textwidth}
  \centering
  \includegraphics[width=\linewidth]{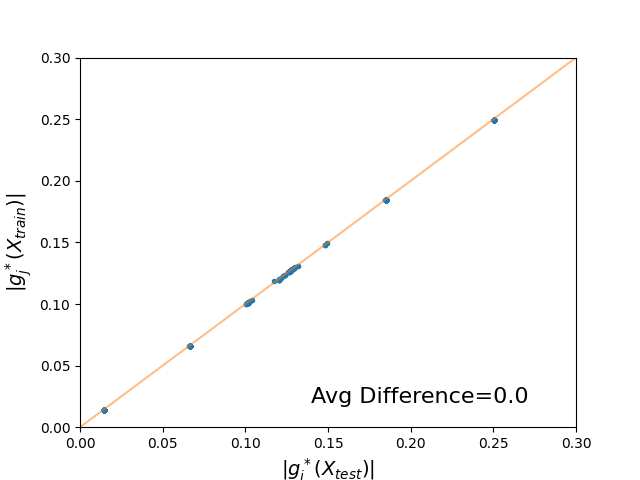}
  \caption{Bank, LR}
\end{subfigure}%
\begin{subfigure}{.245\textwidth}
  \centering
  \includegraphics[width=\linewidth]{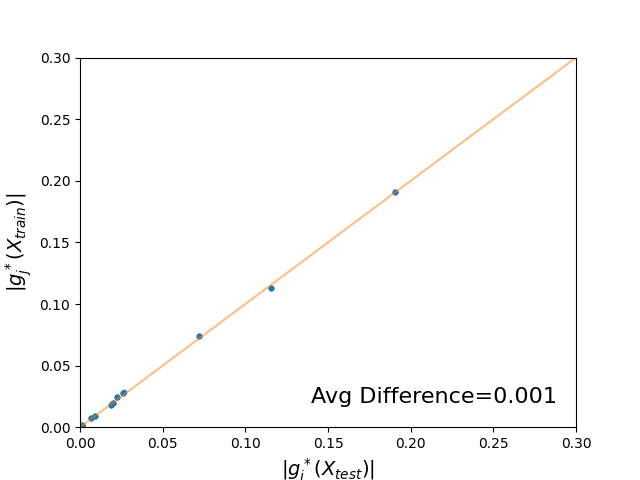}
  \caption{Folk, LIME}
\end{subfigure}
\begin{subfigure}{.245\textwidth}
  \centering
  \includegraphics[width=\linewidth]{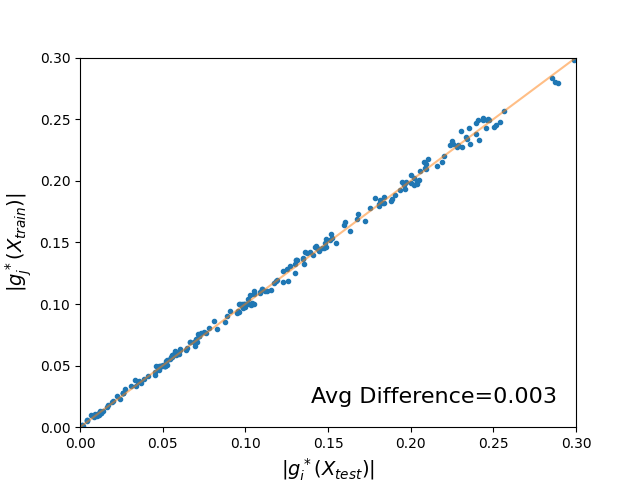}
  \caption{Folk, SHAP}
\end{subfigure}
\begin{subfigure}{.245\textwidth}
  \centering
  \includegraphics[width=\linewidth]{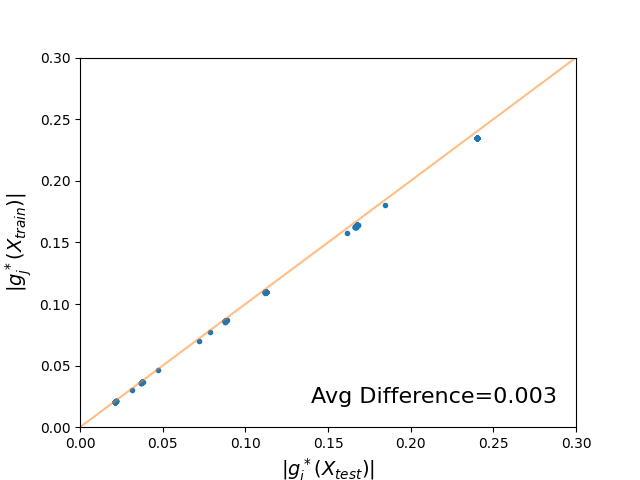}
  \caption{Folk, LR}
\end{subfigure} \\
\caption{Comparing $|g_j^*(X_{train})|$ and $|g_j^*(X_{test})|$. We can see that the size of the subgroup was consistent between the train and test set showing good generalization. The average difference was only noticeable on the Student dataset, due to its smaller size.}
\label{fig:size_gen_app}
\end{figure}

\newpage

\section{Choice of Hypothesis Class}
\label{sec:hypothesis_app}

One ablation study we explored was the choice of classification model $h$. While the main experiments used a random forest model, we also explored using a logistic regression model. The logistic regression model was implemented with the default sklearn hyperparameters. We found that the results are roughly consistent with each other no matter the choice of $h$. In Table~\ref{tab:hypothesis_lime} and Table~\ref{tab:hypothesis_shap}, we see that the features with the highest $\avgfid$, their subgroup sizes, and the $\avgfid$ values are consistent between the choice of hypothesis class. We then looked further into the features that were used to define these subgroups. In Figure~\ref{fig:hypothesis}, we see that the subgroups with high $\avgfid$ for the feature \texttt{Age} were both defined by young, non-Asians. Similarly consistent results were found across all feature importance notions and datasets. 

\begin{figure}[h]
    \centering
    \begin{subfigure}{.45\textwidth}
        \centering
        \includegraphics[width=\linewidth]{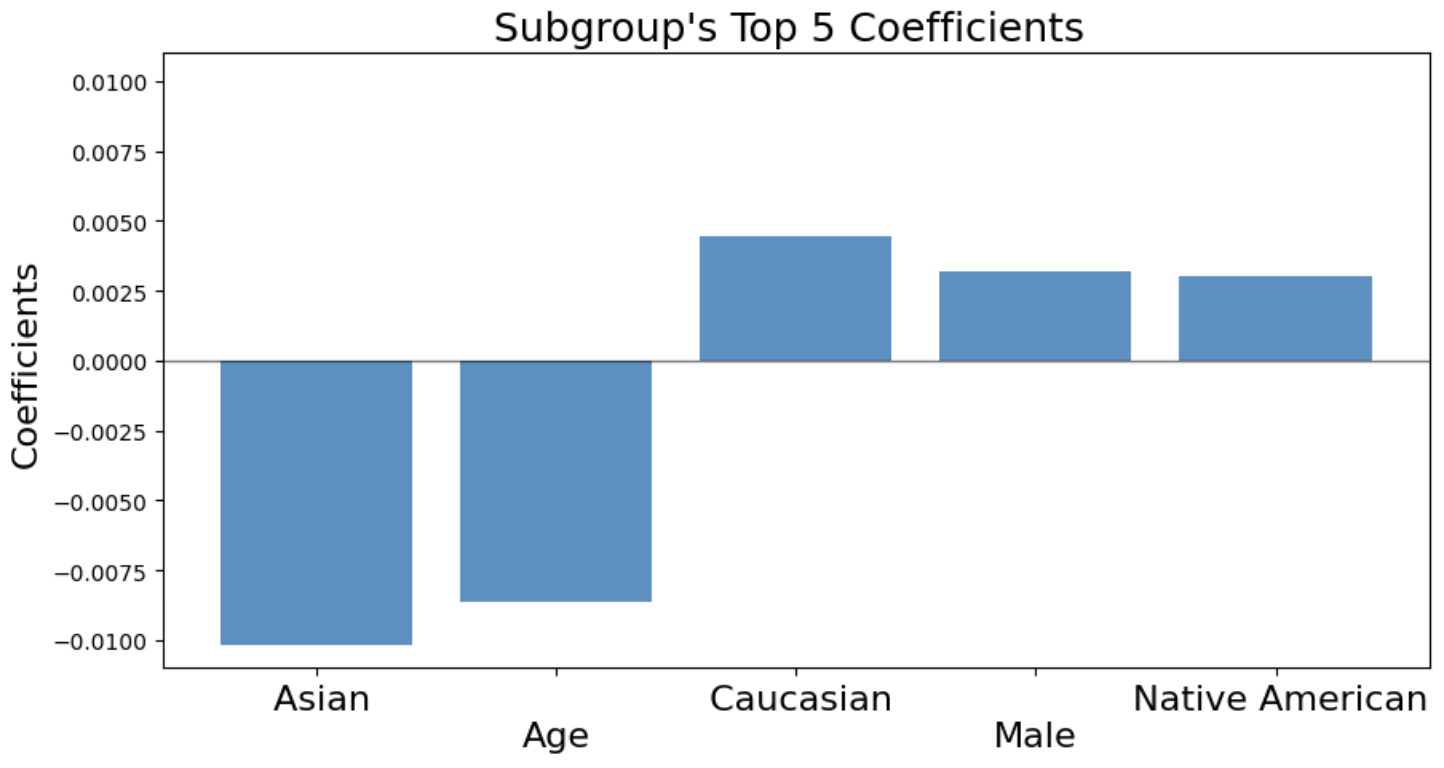}
        \caption{$h=$ Random Forest, $f_{j^*}=$ Age}
        \label{fig:hypothesis_sub2}
    \end{subfigure}
    \begin{subfigure}{.45\textwidth}
        \centering
        \includegraphics[width=\linewidth]{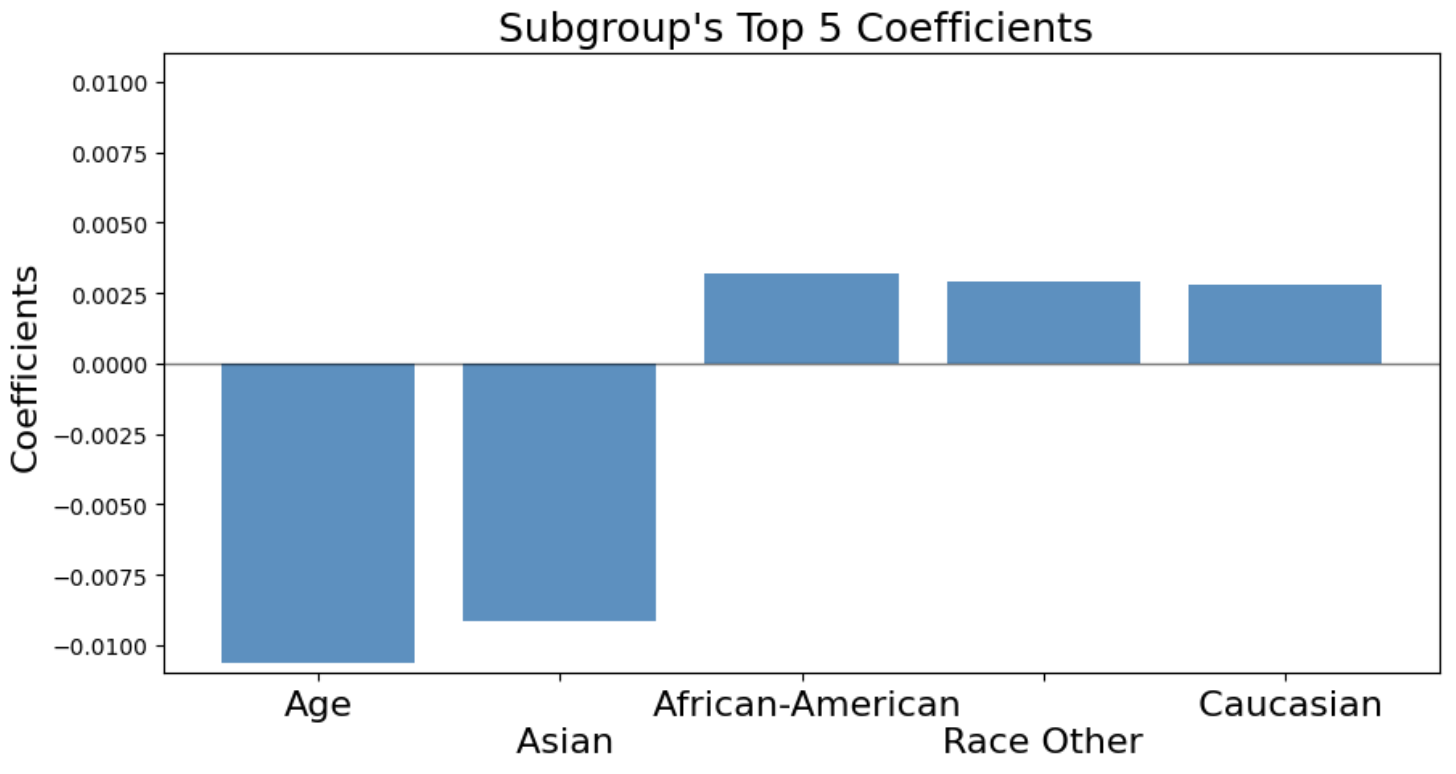}
        \caption{$h=$ Logistic Regression, $f_{j^*}=$ Age}
        \label{fig:hypothesis_sub1}
    \end{subfigure}
\caption{Comparing the choice of hypothesis class of $h$. Here we show the defining coefficients for the highest $\avgfid$ subgroup found on the COMPAS R dataset using LIME as the feature importance notion. For the feature \texttt{Age}, we find that young and non-Asian were the two most defining coefficients for $g^*$ no matter which choice of $h$.}
\label{fig:hypothesis}
\end{figure}

We acknowledge that this ablation study is not the most extensive study, however given that these initial results show no significant differences between choice of hypothesis class, we decided to proceed in the main experiments to present all results using random forest as the hypothesis class. This is certainly an area that could be explored in future studies.

\begin{table}
\caption{Comparing results between using random forest and logistic regression as the hypothesis class for classifier $h$ using LIME as the importance notion on the COMPAS R dataset. Here we display the features with the highest $\avgfid$, the subgroup size $|g|$, and the $\avgfid$. We can see that the choice of hypothesis class $h$ does not substantially affect the output. We used random forest for all of our main experiments.}
\begin{center}
    \begin{tabular}{ |p{2.25cm}p{1.25cm}p{2.15cm}|p{2.25cm}p{1.25cm}p{2.15cm}|  }
         \toprule
         \multicolumn{3}{c}{$h=$ Random Forest} & \multicolumn{3}{c}{$h=$ Logistic Regression} \\
         \midrule
         \hfil Feature &\hfil Size &\hfil $\avgfid$ &\hfil Feature &\hfil Size &\hfil $\avgfid$  \\
         \hline
         Age &\hfil $.05-.1$ &\hfil $.144$ &Age &\hfil $.05-.1$ &\hfil $.21$ \\
         Priors Count &\hfil $.01-.05$ &\hfil $.089$ &Priors count &\hfil $.01-.05$ &\hfil $.092$ \\
         Juv Other Count &\hfil $.01-.05$ &\hfil $.055$ &Juv Other Count &\hfil $.01-.05$ &\hfil $.055$ \\
         Other Features &\hfil - &\hfil $<.025$ &Other Features &\hfil - &\hfil $<.025$ \\
         \bottomrule
    \end{tabular}
    \label{tab:hypothesis_lime}
\end{center}
\end{table}

\begin{table}
    \centering
    \begin{tabular}{ |p{2.25cm}p{1.25cm}p{2.15cm}|p{2.25cm}p{1.25cm}p{2.15cm}|  }
         \toprule
         \multicolumn{3}{c}{$h=$ Random Forest} & \multicolumn{3}{c}{$h=$ Logistic Regression} \\
         \midrule
         \hfil Feature &\hfil Size &\hfil $\avgfid$ &\hfil Feature &\hfil Size &\hfil $\avgfid$ \\
         \hline
         Age &\hfil $.01-.05$ &\hfil $.4$ &Age &\hfil $.01-.05$ &\hfil $.21$ \\
         Priors Count &\hfil $.01-.05$ &\hfil $.11$ &Priors count &\hfil $.01-.05$ &\hfil $.14$ \\
         Other Features &\hfil - &\hfil $<.05$ &Other Features &\hfil - &\hfil $<.05$ \\
         \bottomrule
    \end{tabular}
    \caption{Same as Table~\ref{tab:hypothesis_lime} except using SHAP as the importance notion. With SHAP, there were fewer features with significant $\avgfid$ before dropping off but in both cases, the choice of $h$ did not significantly affect the outcome.}
    \label{tab:hypothesis_shap}
\end{table}

\section{Algorithm ~\ref{alg:cap} Optimization Convergence}
\label{sec:opt_app}

Here are additional graphs showing examples of the convergence of Algorithm~\ref{alg:cap}. Data was tracked every $10$ iterations, recording the Lagrangian values (to compute the error $v_t=max(\lvert L(\hat{p}_{\mathcal{G}}^{t}, \hat{p}_{\lambda}^{t}) - \underline{L} \rvert, \lvert \overline{L} - L(\hat{p}_{\mathcal{G}}^{t}, \hat{p}_{\lambda}^{t})  \vert)$), the subgroup size, and $\avgfid$ value, graphed respectively in Figure~\ref{fig:sep_conv_app}. We can see $\avgfid$ value moving upward, except when the subgroup size is outside the $\alpha$ range, and the Lagrangian error converging upon the set error bound $v$ before terminating.

While Theorem~\ref{alg:cap} states that convergence time may grow quadratically, in practice we found that computation time was not a significant concern. The time for convergence varied slightly based on dataset but for the most part, convergence for a given feature was achieved in a handful of iterations that took a few seconds to compute. Features which took several thousand iterations could take around 30 minutes to compute on larger datasets.

\begin{figure}
\centering
\begin{subfigure}{.4\textwidth}
  \centering
  \includegraphics[width=\linewidth]{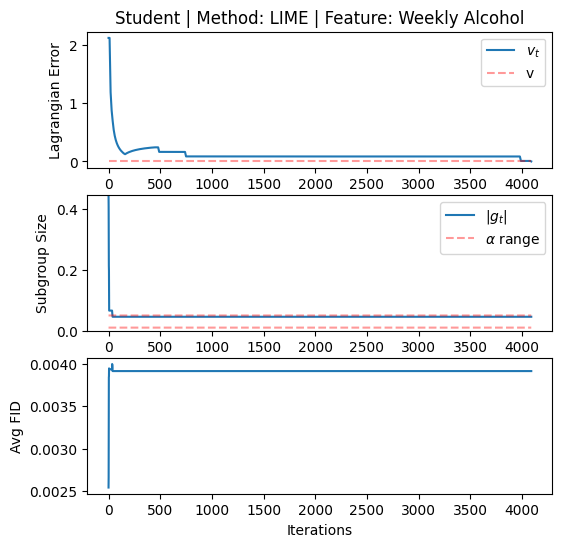}
\end{subfigure}%
\begin{subfigure}{.4\textwidth}
  \centering
  \includegraphics[width=\linewidth]{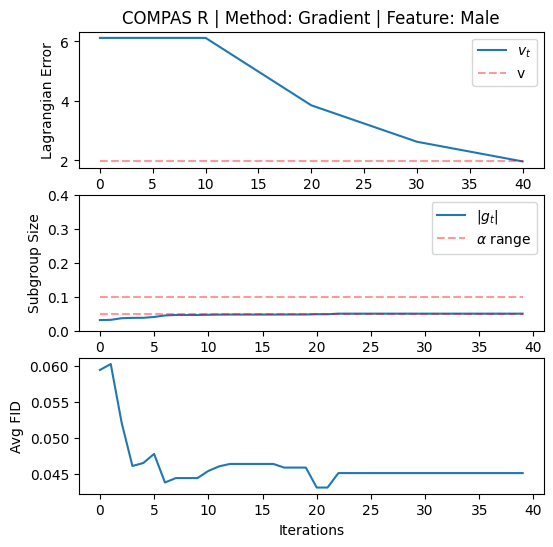}
\end{subfigure} \\
\begin{subfigure}{.4\textwidth}
  \centering
  \includegraphics[width=\linewidth]{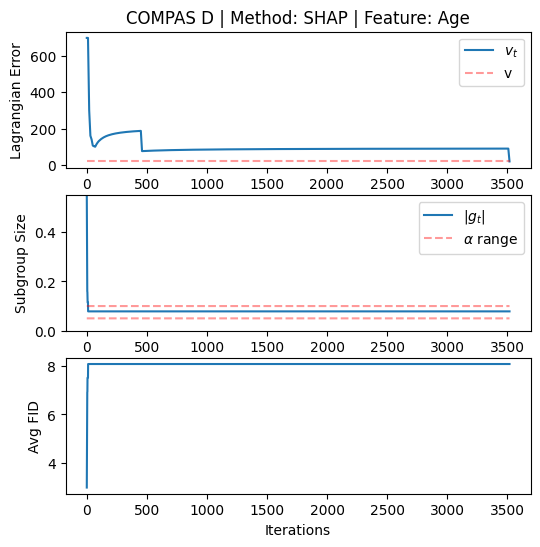}
\end{subfigure}%
\begin{subfigure}{.4\textwidth}
  \centering
  \includegraphics[width=\linewidth]{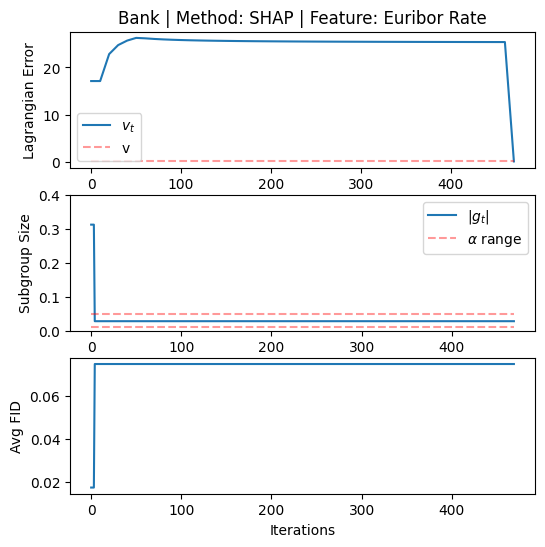}
\end{subfigure} \\
\begin{subfigure}{.4\textwidth}
  \centering
  \includegraphics[width=\linewidth]{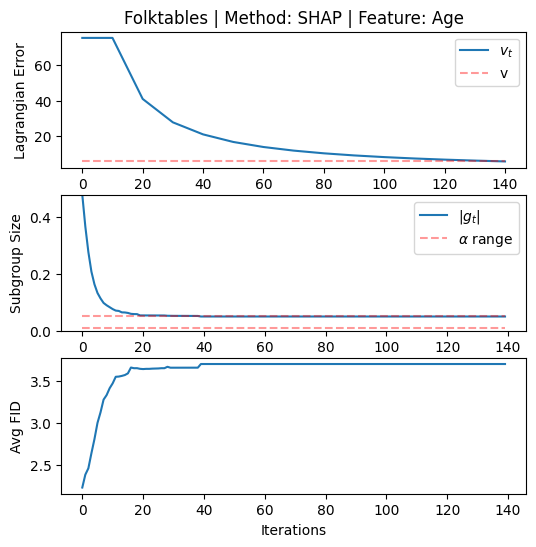}
\end{subfigure}
\caption{Plots detailing the convergence of Algorithm~\ref{alg:cap}. The top plot shows the error convergence, i.e. the max difference in Lagrangian values between our solution and the min/max-players' solution. The other two plots display the subgroup size and $\avgfid$ of the solution. Convergence almost always happened in fewer than $5000$ iterations, allaying concerns about theoretical run time.}
\label{fig:sep_conv_app}
\end{figure}

\newpage

\section{Non-Separable Optimization Convergence}
\label{sec:conv_app}

Here are additional graphs showing the convergence in the non-separable approach. Using the loss function that rewards minimizing the linear regression coefficient (or maximizing it) and having a size within the alpha constraints, we typically reach convergence after a few hundred iterations. In Figure~\ref{fig:conv_app}, we can see in the respective upper graphs that the subgroup size converges to the specified $\alpha$ range and stays there. Meanwhile, in the lower graph, we see the $\linfid$ attempt to maximize but oscillates as the appropriate size is found.

Convergence using this method almost always took $<1000$ iterations. Running this for all features took around 2 hours to compute on the largest datasets. The optimization was run using GPU computing on NVIDIA Tesla V100s.

\begin{figure}[H]
\centering
\begin{subfigure}{.39\textwidth}
  \centering
  \includegraphics[width=\linewidth]{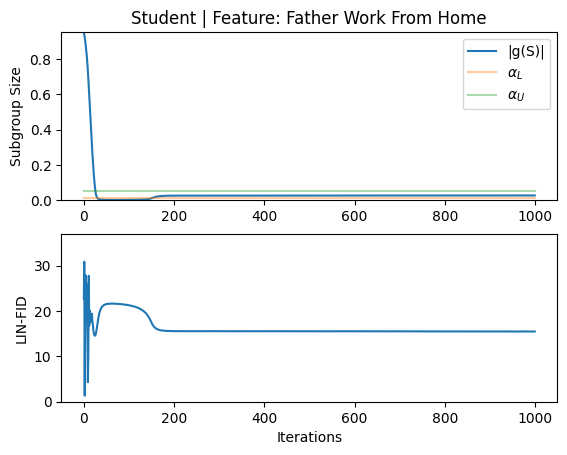}
\end{subfigure}%
\begin{subfigure}{.39\textwidth}
  \centering
  \includegraphics[width=\linewidth]{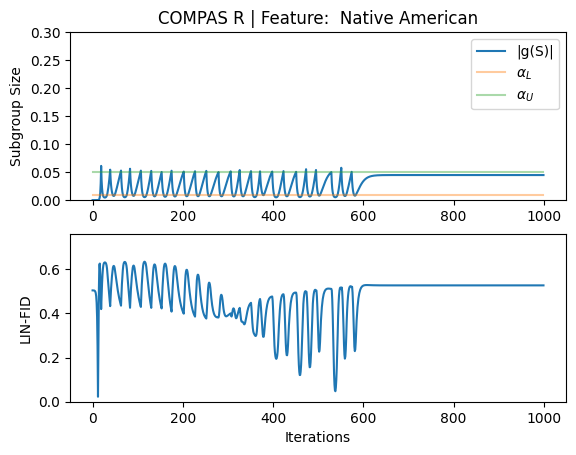}
\end{subfigure} \\
\begin{subfigure}{.39\textwidth}
  \centering
  \includegraphics[width=\linewidth]{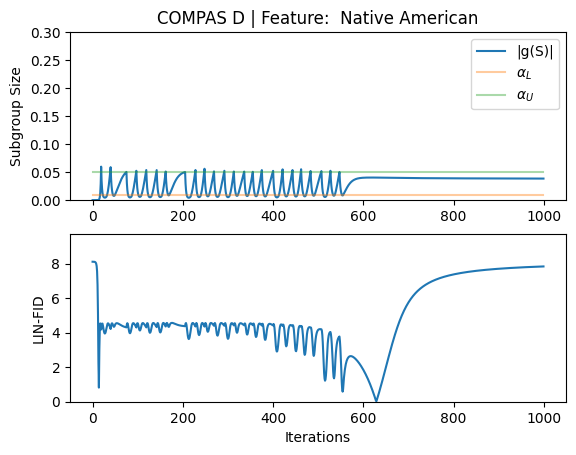}
\end{subfigure}%
\begin{subfigure}{.39\textwidth}
  \centering
  \includegraphics[width=\linewidth]{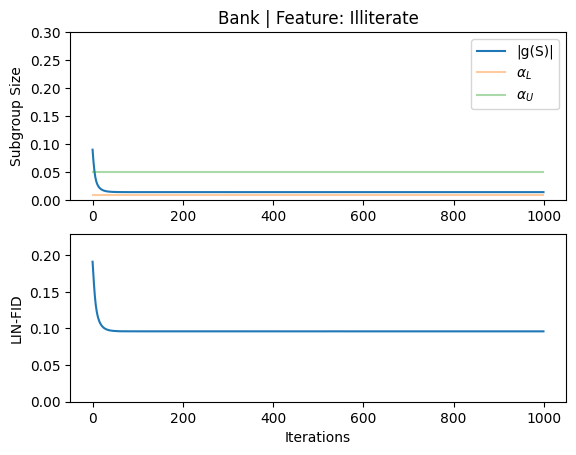}
\end{subfigure} \\
\begin{subfigure}{.39\textwidth}
  \centering
  \includegraphics[width=\linewidth]{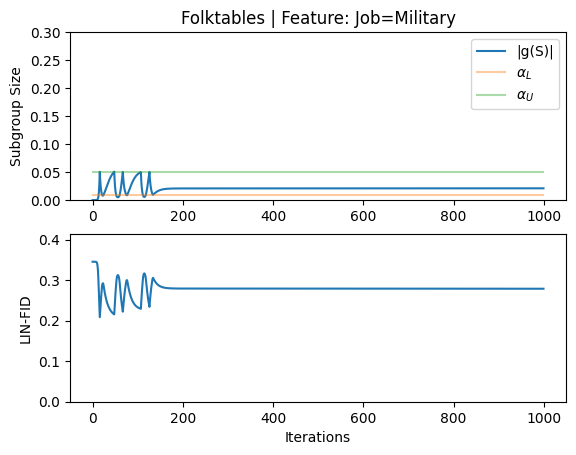}
\end{subfigure}
\caption{Plots of subgroup size and linear regression coefficient  of $g$ over the training iterations of the Adam optimizer. For each dataset, the feature with the highest $\linfid$ was displayed.}
\label{fig:conv_app}
\end{figure}

\section{Importance Notion Consistency}
\label{sec:gen_app}

To see how consistent importance notion methods were, we plotted the values of $F(f_j, X_{test},h)$ against $F(f_j, X_{train},h)$ with each point representing a feature $f_j$ of the COMPAS dataset. The closer these points track the diagonal line, the more consistent a method is in providing the importance values. As we can see in Figure~\ref{fig:gen_app}, LIME and GRAD are extremely consistent. Linear regression is less consistent, due to instability in fitting the least squares estimator on ill-conditioned design matrices. SHAP is also inconsistent in its feature importance attribution, however the $\avgfid$ still generalized well as seen in Figure~\ref{fig:fid_gen}. This could mean that while SHAP is inconsistent from dataset to dataset, it is consistent relative to itself. i.e. if $F(j,X_{train}) > F(j,X_{test})$ then $F(j,g(X_{train})) > F(f,X_{test})$ meaning the $\avgfid$ value would remain the same.

These inconsistencies seem to be inherent in some of these explainability methods as noted in other research \cite{disagree, Dai_2022, agarwal2022rethinking, alvarezmelis2018robustness, bansal2020sam}. Exploring these generalization properties would be an exciting future direction for this work.

\begin{figure}[H]
\centering
\begin{subfigure}{.4\textwidth}
  \centering
  \includegraphics[width=\linewidth]{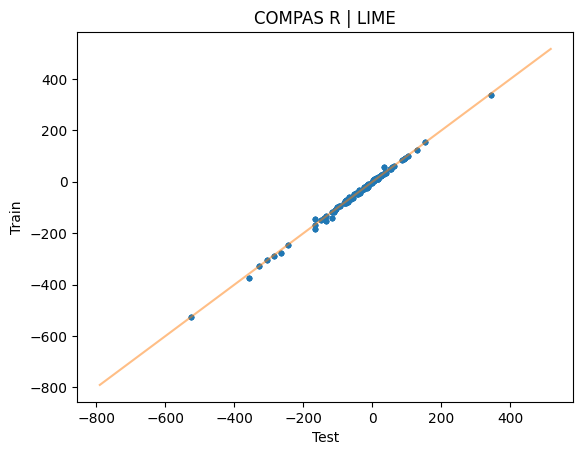}
\end{subfigure}
\begin{subfigure}{.4\textwidth}
  \centering
  \includegraphics[width=\linewidth]{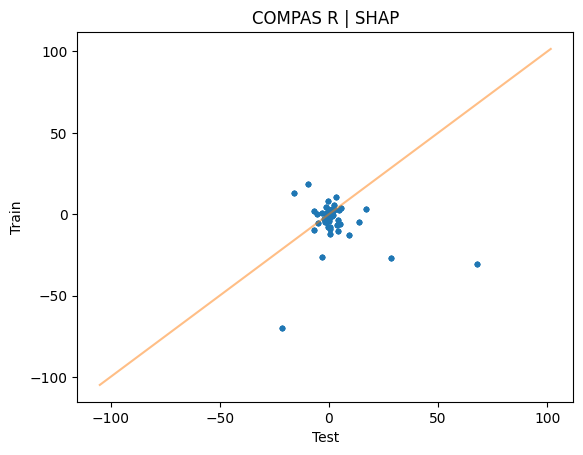}
\end{subfigure} \\
\begin{subfigure}{.4\textwidth}
  \centering
  \includegraphics[width=\linewidth]{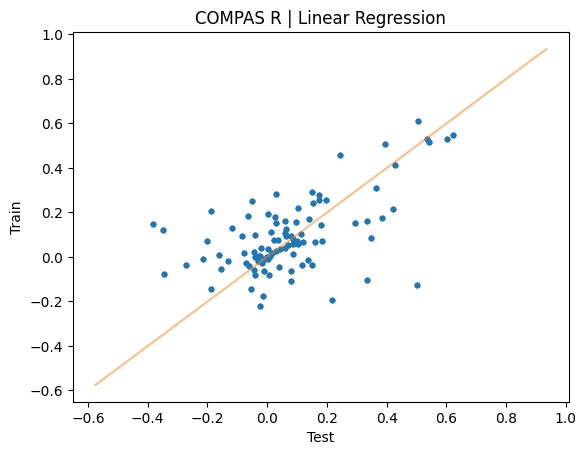}
\end{subfigure}
\begin{subfigure}{.4\textwidth}
  \centering
  \includegraphics[width=\linewidth]{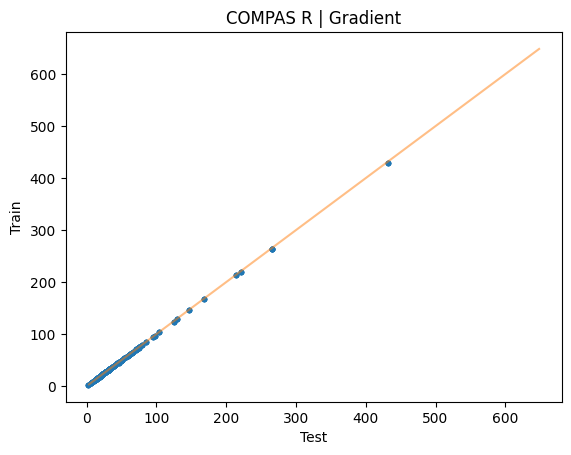}
\end{subfigure} \\
\caption{Consistencies of importance notions. Each point represents a feature, the x-value is $F(j,X_{test})$, and y-value is $F(j,X_{train})$. The closer the points are to the diagonal, the more consistent the notion is.}
\label{fig:gen_app}
\end{figure}

\section{Ethical Review}
\label{sec:ethics}

There were no substantial ethical issues that came up during this research process. The datasets used are all publicly available, de-identified, and have frequently been used in fair machine learning research. There was no component of this research that sought to re-identify the data or use it in any fashion other than to test our methodology.

\end{document}